\documentclass{article} 
\usepackage[final]{colm2026_conference}

\usepackage{microtype}
\usepackage{hyperref}
\usepackage{url}
\usepackage{booktabs}
\usepackage{amsmath}
\usepackage{graphicx}
\usepackage{enumitem}
\usepackage{xspace}
\usepackage{multirow}
\usepackage{multicol}
\usepackage{cleveref}
\crefformat{section}{§#2#1#3}
\crefformat{subsection}{§#2#1#3}
\crefformat{subsubsection}{§#2#1#3}
\usepackage{caption} 
\usepackage{amsthm}
\newtheorem{proposition}{Proposition}
\usepackage{graphicx}
\usepackage{amssymb}
\usepackage{subcaption}
\usepackage{tcolorbox}
\usepackage{wrapfig}

\newcommand{\std}[1]{\scriptsize$\pm$#1}
\newcommand{\proposed}{VRF\xspace}

\usepackage{lineno}

\definecolor{darkblue}{rgb}{0, 0, 0.5}
\hypersetup{colorlinks=true, citecolor=darkblue, linkcolor=darkblue, urlcolor=darkblue}

\title{Uncertainty-Aware Variational Reward Factorization via \\ Probabilistic Preference Bases for LLM Personalization}

\author{Gyuseok Lee\textsuperscript{1} \quad Wonbin Kweon\textsuperscript{1} \quad Zhenrui Yue\textsuperscript{1} \\ \textbf{SeongKu Kang\textsuperscript{2} \quad Jiawei Han\textsuperscript{1} \quad Dong Wang\textsuperscript{1}} \\
\textsuperscript{1}University of Illinois Urbana-Champaign \quad \textsuperscript{2}Korea University \\
\texttt{\{gyuseok2,wonbin,zhenrui3,hanj,dwang24\}@illinois.edu}
 \quad \texttt{seongkukang@korea.ac.kr}
}

\begin{document}

\ifcolmsubmission
\linenumbers
\fi

\maketitle

\begin{abstract}
Reward factorization personalizes large language models (LLMs) by~decomposing rewards into shared basis functions and user-specific weights.
%
Yet, existing methods estimate user weights from scarce data in isolation and as deterministic points, leading to inaccurate and unreliable~inference.
%
We introduce Variational Reward Factorization (\proposed), an uncertainty-aware framework that represents each user's preferences as a variational distribution in a shared preference space.
%
\proposed infers user distributions via a variational encoder, derives weights through Wasserstein distance matching with shared probabilistic bases, and downweights uncertain estimates through a variance-attenuated loss.
%
~On three benchmarks, \proposed outperforms all baselines across seen and unseen users, few-shot scenarios, and varying uncertainty levels, with gains extending to downstream alignment.
Our code is available at \url{https://github.com/Gyu-Seok-Lee/VRF_COLM26}.
\end{abstract}

\section{Introduction}
\label{sec:introduction}


Aligning large language models (LLMs) with human preferences has attracted growing interest~\citep{ziegler2019fine, bai2022training, touvron2023llama, DPO}. 
One prominent line of research is reward modeling, which first learns preferences from pairwise feedback, then guides LLM alignment (e.g., RLHF)~\citep{christiano2017deep, stiennon2020learning, ouyang2022training}.
However, conventional approaches learn a single, shared reward model that inevitably collapses toward the average user, neglecting the diverse and often conflicting preferences~\citep{bakker2022fine, casper2023open, durmus2024towards}.

 
Beyond this one-size-fits-all assumption, personalized alignment of LLMs has been~explored~\citep{kirk2024benefits, sorensen2024position}. 
A leading approach is reward factorization~\citep{LoRe, PReF}, which decomposes a scalar reward as $r_u = \mathbf{w}_u^\top \boldsymbol{\phi}$, where $\mathbf{w}_u \in \Delta^{K-1}$ are user-specific weights and $\boldsymbol{\phi} \in \mathbb{R}^K$ are shared basis reward functions, with each $\phi_k$ capturing a distinct preference aspect. 
This factorization enables efficient personalization without training a per-user reward model~\citep{personalllm, cai2026one}.

However, existing reward factorization methods face two fundamental challenges in modeling user weights $\mathbf{w}_u$: (C1)~\textit{isolated preference modeling} and (C2)~\textit{deterministic point estimates}.\vspace{0.3em}
\textbf{(C1) Inaccurate estimation from isolated preference modeling:}
Inferring user preferences requires sufficient observations, 
yet per-user preference data is typically 
scarce\footnote{e.g., users in PRISM~\citep{prism} have only 5.7 conversations on average.}~\citep{christiano2017deep, ryan-etal-2025-synthesizeme}, leading to inaccurate estimation. 
While users often share common preference patterns that could compensate for limited data, existing methods estimate each user's weight in isolation, failing to leverage this commonality. 
This problem extends to unseen users, who need to estimate 
$\mathbf{w}_u$ from scratch without any learned structure.
We empirically confirm that performance generally degrades with less data in our experiments.\vspace{0.3em} 
%
%
%
%
\textbf{(C2) Unreliable inference from deterministic point estimates:}
User preference histories inherently carry uncertainty~\citep{casper2023open, DPL}, arising from both data scarcity and preference diversity.
Yet existing methods estimate the user weight as a deterministic point, which cannot express how reliably the estimate captures the user's underlying preferences.
As illustrated in Figure~\ref{fig:motivation},  users exhibit varying preference patterns: User A is consistent within a single style, while User B spreads across multiple styles. 
Intuitively, the estimate for User B should entail greater uncertainty than that for User A. 
However, without uncertainty quantification, deterministic point estimates implicitly assign uniform certainty to every user, leading to unreliable inference of user preferences.
\begin{figure}[t]
  \centering
  \makebox[\linewidth][c]{\includegraphics[width=1.02\linewidth]{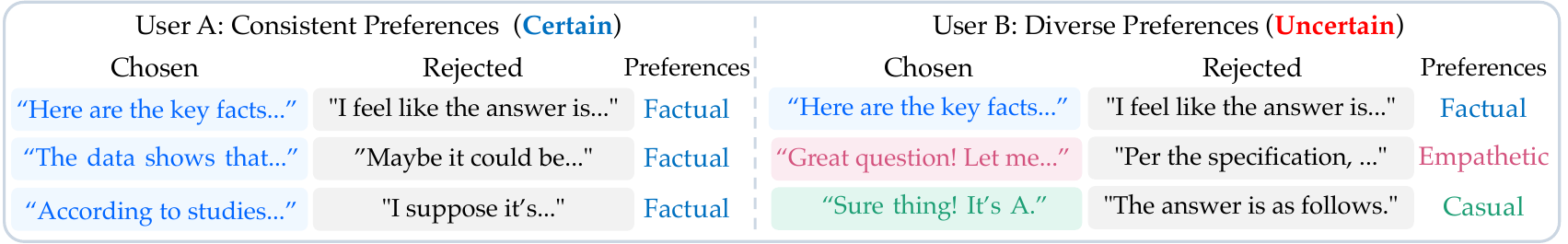}}
  \caption{Motivating example. Preferences can be consistent (certain) or diverse (uncertain).}
  \label{fig:motivation}
  \vspace{-0.4cm}
\end{figure}

In this paper, we propose \textbf{\underline{V}}ariational \textbf{\underline{R}}eward \textbf{\underline{F}}actorization (\textbf{\proposed}) for uncertainty-aware LLM personalization.
The core idea of \proposed is to represent each user not as a deterministic point but as a variational distribution in a shared preference space, from which user-specific weights are derived.
Specifically, \proposed introduces three key components: (1) a variational encoder that infers each user's preferences as a Gaussian distribution from a small reference set, 
(2) shared probabilistic preference bases with Wasserstein distance-based matching that jointly accounts for preference direction and confidence, and 
(3) a variance-attenuated preference loss that downweights uncertain reward estimates during training.

%
%
VRF systematically addresses both challenges. 
For \textbf{C1}, the shared preference bases encode population-level patterns, allowing data-scarce users to benefit from collective signals.
For \textbf{C2}, the probabilistic formulation of users and bases enables uncertainty-aware preference modeling, supported by the variance-attenuated loss.
At inference, the variational encoder with shared bases enables effective user weight estimation for unseen users in a single forward pass without any optimization.
Our core contributions are as follows:
\begin{itemize}[leftmargin=*]
    \item We identify two fundamental challenges of existing reward factorization approaches that remain underexplored in the literature, highlighting the need for systematic investigation.

    \item We propose VRF, an uncertainty-aware reward factorization framework that represents users and preference bases as probabilistic distributions over a shared preference space.
    

    \item We validate \proposed on three datasets, showing consistent improvements over~state-of-the-art baselines on seen and unseen users.~Further analyses highlight its robustness across varying data scarcity and uncertainty levels, with gains extending to downstream~alignment.
    
    
\end{itemize}
    
    

\section{Related Work}
\label{sec:related_work}
\paragraph{LLM Personalization.}
Personalizing LLMs to individual user preferences has been explored 
along several axes.
Prompt-based methods incorporate user histories~\citep{dai2023uncovering, li2023bookgpt} or retrieval-augmented context~\citep{richardson2023integrating, salemi2024lamp} into the input without additional training.
Other approaches personalize LLMs at the parametric level, including per-user 
adapters~\citep{tan2024personalized, tan2024democratizing}, user embeddings~\citep{doddapaneni2024user, liu2025llms+}, and steering vectors~\citep{cao2024personalized, zhang2025personalized}. 
While these methods implicitly encode user preferences at either the input or parametric level, our approach explicitly models them within the reward modeling framework.

\paragraph{Personalized Reward Modeling.}
Reward modeling directly learns human preferences from pairwise comparisons to guide LLM alignment (e.g., RLHF)~\citep{christiano2017deep, stiennon2020learning, 
ouyang2022training}.
However, conventional approaches learn a single, shared reward model that fails to capture diverse user preferences~\citep{bakker2022fine, casper2023open, durmus2024towards}.
To address this, reward factorization represents the personalized reward using shared basis functions and user-specific weights~\citep{LoRe, PAL, PReF}. 
Yet, existing methods face two key limitations in modeling user weights: (1) isolated per-user estimation that ignores shared preference patterns, and (2) deterministic point estimates that lack uncertainty quantification.
Although PAL~\citep{PAL} employs shared prototypes, they are designed for reward computation rather than user weight estimation, and its ideal~point formulation lacks uncertainty quantification.\footnote{Other approaches leverage cross-user signals but do not adopt reward factorization~\citep{li2024personalized, CoPL}, or share reward heads but route at the textual level without 
per-user inference~\citep{shen2025micro}. None of these methods address uncertainty in user weight estimation.
DPL~\citep{DPL} captures reward uncertainty but does not personalize to individual users.}
%
VPL~\citep{VPL} introduces variational inference for measuring uncertainty, but infers each user in isolation, leading to posterior collapse under sparse feedback~\citep{kim2026swap}.
These challenges motivate an uncertainty-aware reward factorization that leverages shared preference patterns beyond individual observations.

\vspace{-0.3cm}
\section{Preliminaries} 
\label{sec:preliminaries}
\subsection{Problem Formulation}
For each user $u \in \mathcal{U}$, we partition pairwise preference data into two disjoint sets: a reference set $\mathcal{C}_u$ for inferring user preferences and a target set $\mathcal{D}_u$ for computing the personalized reward. Each set contains triplets $(x, y^+, y^-)$, where $x \in \mathcal{X}$ is a query and $y^+, y^- \in \mathcal{Y}$ are chosen and rejected responses. Our goal is to learn a personalized reward function $r_u(x, y | \mathcal{C}_u)$ over $\mathcal{D}_u$.
The reward function is then used for personalized alignment of a policy model, either at training time like RLHF~\citep{christiano2017deep, ouyang2022training}~or at inference time like~best-of-N sampling~\citep{stiennon2020learning, touvron2023llama}.


\subsection{User-Agnostic Reward Modeling}
\label{subsec:standard_rm}
Early approaches to reward modeling~\citep{ziegler2019fine, stiennon2020learning, ouyang2022training} adopt the Bradley-Terry (BT) model \citep{bradley1952rank} to formulate the probability that response $y^+$ is preferred over $y^-$ given $x$ as:

\begin{equation}\label{eq:r_bt}
    P(y^+ \succ y^- \mid x) = \sigma\bigl(r_\theta(x, y^+) - r_\theta(x, y^-)\bigr),
\end{equation}

where $r_\theta(x, y) \in \mathbb{R}$ is a single reward function shared across all users and $\sigma(\cdot)$ is the sigmoid function.
$r_\theta$ is optimized by minimizing the negative log-likelihood as follows:

\begin{equation}
    \mathcal{L}_{\mathrm{BT}}(\theta) = -\mathbb{E}_{(x,y^+,y^-)\sim\mathcal{D}} \bigl[\log \sigma\bigl(r_\theta(x,y^+) - r_\theta(x,y^-)\bigr)\bigr],
\end{equation}

where $\mathcal{D} = \bigcup_{u \in \mathcal{U}} \mathcal{D}_u$.
Since $r_\theta$ is optimized over the aggregated data across users, it collapses heterogeneous preferences into a generic reward. For 
instance, given a prompt $x$, if User~A prefers a formal 
response $y_1$ over a casual one $y_2$, while User~B prefers the opposite, $r_\theta$ averages these conflicting signals ($r_\theta(x, y_1) \approx r_\theta(x, y_2)$), thereby failing to satisfy either user.






\subsection{Reward Factorization}
\label{subsec:reward_factorization}
To capture individual user preferences, recent work~\citep{LoRe, PReF} factorizes the single scalar reward as follows: 

\begin{equation}\label{eq:r_u(xy)}
    r_u(x, y) = \mathbf{w}_u^\top \boldsymbol{\phi}(x, y),
\end{equation}

where $\boldsymbol{\phi}(x, y) = [\phi_1(x, y), \dots, \phi_K(x, y)]^\top \in \mathbb{R}^K$ denotes $K$ shared basis reward functions parameterized by a neural network, with each $\phi_k$ scoring a distinct preference style (e.g., conciseness or formality).
$\mathbf{w}_u \in 
\Delta^{K-1}$ is a user weight that personalizes the reward by determining the relative importance among the bases. 
The training objective is as follows:

\begin{equation}\label{eq:L_RF}
    \mathcal{L}_{RF}(\mathbf{w}, \boldsymbol{\phi}) = - \mathbb{E}_{u \sim \mathcal{U}} \mathbb{E}_{(x, y^+, y^-) \sim \mathcal{D}_u} \left[ \log \sigma \big( r_u(x, y^+) - r_u(x, y^-) \big) \right].
\end{equation}

However, existing reward factorization methods estimate $\mathbf{w}_u$ in isolation~without leveraging shared preference patterns (C1), and represent it as a deterministic point without uncertainty quantification (C2), resulting in inaccurate and unreliable~personalization.
%


\section{Variational Reward Factorization} 
\label{sec:method}

\begin{figure}[t]
  \centering
  \makebox[\linewidth][c]{\includegraphics[width=1.02\linewidth]{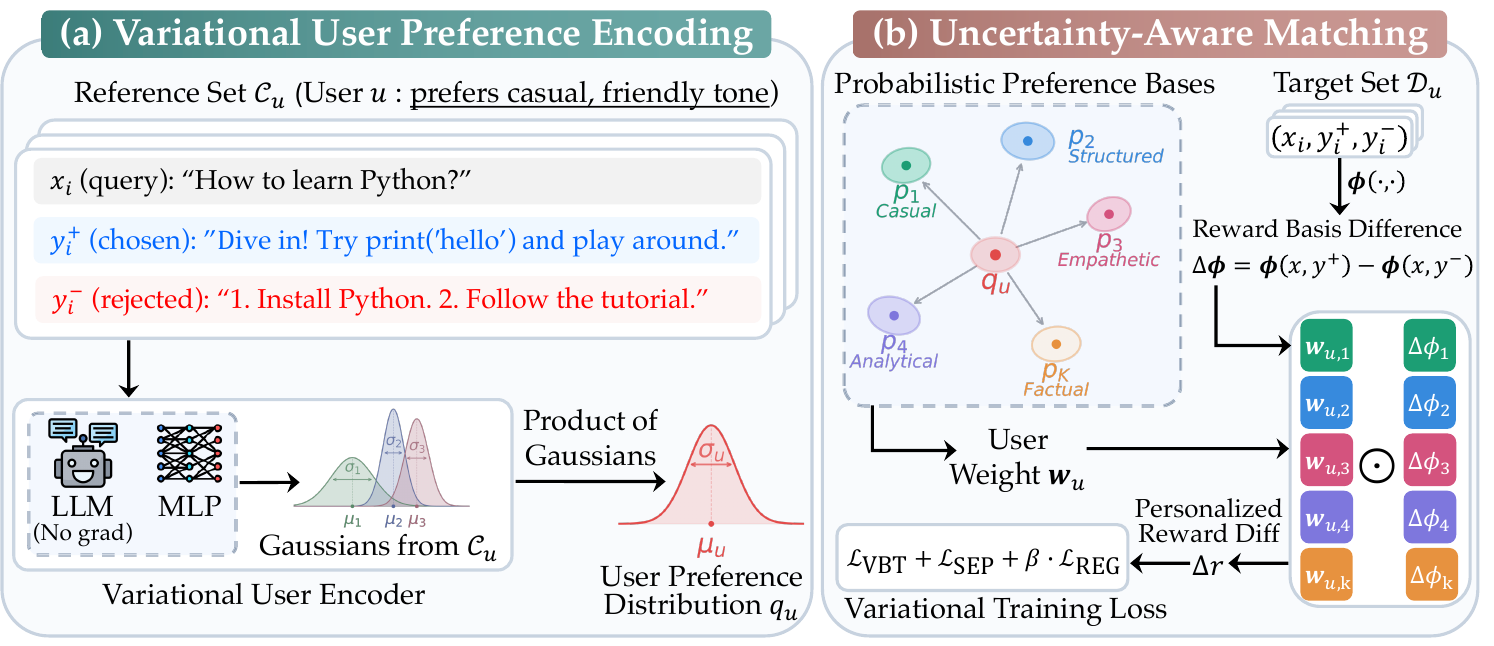}}
  \caption{Overall framework of \proposed. 
  Best viewed in color.}
  \label{fig:method}
  \vspace{-0.3cm}
\end{figure}


To address these challenges, we propose \textbf{Variational Reward Factorization} (\textbf{\proposed}), which represents each user's preferences not as a deterministic point, but as a variational distribution in a shared preference space, from which the user weight is derived. We first infer the user preference distribution (\cref{subsec:var_user_encoder}), then derive the user weight in probabilistic preference space (\cref{subsec:user_preference_modeling}). We further propose uncertainty-aware training objectives (\cref{subsec:training_objectives}). Finally, we present few-shot inference for unseen users (\cref{subsub:inference}).
Figure~\ref{fig:method} shows an overview of \proposed.


\subsection{Variational User Preference Encoding}
\label{subsec:var_user_encoder}
We propose a variational user encoder that infers each user's preference distribution $q_u = \mathcal{N}(\boldsymbol{\mu}_u, \mathrm{diag}(\boldsymbol{\sigma}_u^2))$ from a small reference set $\mathcal{C}_u$.
Here, $\boldsymbol{\mu}_u$ encodes the preference location and $\boldsymbol{\sigma}_u$ captures its uncertainty, which is inherently absent in deterministic point estimates.

\paragraph{Variational Pairwise Encoding.}
Given the reference set $\mathcal{C}_u$ for user $u$, we form a pairwise representation $\mathbf{v}_i \in \mathbb{R}^{2H}$ from each  triplet\footnote{$ \mathbf{v}_i = [\frac{\mathbf{h}_i^+ + 
\mathbf{h}_i^-}{2} \| \ \mathbf{h}_i^+ - 
\mathbf{h}_i^-] \in \mathbb{R}^{2H} $, where $\mathbf{h}_i^+ = \mathrm{LLM}(x_i \| y_i^+) \in \mathbb{R}^{H}$ and $\mathbf{h}_i^- = \mathrm{LLM}(x_i \| y_i^-) \in \mathbb{R}^{H}$ are last-token hidden states extracted under no gradient and $\|$ denotes concatenation. The mean and difference capture shared content and preference direction, respectively.} $(x_i, y_i^+, y_i^-)$.
Then, a two-layer MLP maps $\mathbf{v}_i$ to Gaussian parameters $\boldsymbol{\mu}_i, \boldsymbol{\sigma}_i \in \mathbb{R}^D$~\citep{DBLP:journals/corr/KingmaW13}, resulting in $|\mathcal{C}_u|$ Gaussians (i.e., $\{\mathcal{N}(\boldsymbol{\mu}_i, \mathrm{diag}(\boldsymbol{\sigma}_i^2))\}_{i=1}^{|\mathcal{C}_u|}$).

\paragraph{User Preference Aggregation.}
To aggregate $|\mathcal{C}_u|$ Gaussians into a single user distribution $q_u = \mathcal{N}(\boldsymbol{\mu}_u, \mathrm{diag}(\boldsymbol{\sigma}_u^2))$, we adopt the product of Gaussians~\citep{bishop2006pattern}, which yields 
a closed-form aggregation:
\begin{equation}\label{eq:mu_u and simga_u}
    \boldsymbol{\mu}_u = \frac{\sum_{i=1}^{M_u} \boldsymbol{\mu}_i \odot \boldsymbol{\sigma}_i^{-2}}{\sum_{i=1}^{M_u} \boldsymbol{\sigma}_i^{-2}}, \quad \boldsymbol{\sigma}_u^2 = \frac{1}{\sum_{i=1}^{M_u} \boldsymbol{\sigma}_i^{-2}}, 
\end{equation}
where $\odot$ is element-wise multiplication.
Unlike simple averaging, the product of Gaussians enables 
more reliable estimation, weighting each observation by 
its precision $\boldsymbol{\sigma}_i^{-2}$. 
Thus, $\boldsymbol{\mu}_u$ reflects more confident observations while $\boldsymbol{\sigma}_u$ decreases as more observations are available.


\subsection{Uncertainty-Aware Preference Matching}
\label{subsec:user_preference_modeling}
Given the user distribution $q_u = \mathcal{N}(\boldsymbol{\mu}_u, \mathrm{diag}(\boldsymbol{\sigma}_u^2))$, we derive the user weight $\mathbf{w}_u \in \Delta^{K-1}$ over $K$ shared probabilistic preference bases (i.e., Gaussians).
By representing both users and bases as distributions, we enable uncertainty-aware preference matching between them.
%
%
\paragraph{Probabilistic Preference Bases.}
\label{subsub:prob_preference_bases}
We maintain $K$ shared preference bases 
$\mathcal{P} = \{p_k\}_{k=1}^K$, where 
$p_k = \mathcal{N}(\boldsymbol{\mu}_k, 
\mathrm{diag}(\boldsymbol{\sigma}_k^2))$ with 
$\boldsymbol{\mu}_k, \boldsymbol{\sigma}_k \in 
\mathbb{R}^D$ as learnable parameters. Each $p_k$ 
represents a distinct preference style (e.g., conciseness, 
formality, or empathy), corresponding to the $k$-th reward function $\phi_k(x, y)$. 
The bases $\mathcal{P}$ are randomly initialized and learned to capture domain-specific preference patterns that minimize the overall training objective.

The preference bases $\mathcal{P}$ offer three key benefits: (1) sharing bases 
across users encodes population-level patterns, allowing 
data-sparse and unseen users to benefit from collective signals (C1); 
(2) modeling bases as Gaussians enables uncertainty-aware 
matching with $q_u$, naturally incorporating preference 
uncertainty into $\mathbf{w}_u$ (C2); and (3) $\mathcal{P}$ is 
parameter-efficient, as the number of bases remains fixed 
regardless of the user population size.


\paragraph{User Weight Derivation via Wasserstein Distance.}
We derive the user weight $\mathbf{w}_u$ via the squared 2-Wasserstein distance\footnote{ 
Unlike KL divergence, $\mathcal{W}_2$ is symmetric and considers the geometric structure of the latent space.} $\mathcal{W}_2(\cdot,\cdot)^2$ between the user distribution $q_u$ and each preference basis $p_k$. For diagonal Gaussians, this admits a closed form $\mathcal{W}_2(q_u, p_k)^2 = \frac{1}{D} ( \|\boldsymbol{\mu}_u - \boldsymbol{\mu}_k\|_2^2 + \|\boldsymbol{\sigma}_u - \boldsymbol{\sigma}_k\|_2^2 )$~\citep{villani2008optimal}, which naturally accounts for both preference location and uncertainty. 
We then derive $\mathbf{w}_u$ from these distances via 
softmax:
\begin{equation}\label{eq:w_uk}
  \mathbf{w}_u = [w_{u,1}, \dots, w_{u,K}]^\top \in 
  \Delta^{K-1}, \quad
  w_{u,k} = \frac{\exp(-\mathcal{W}_2(q_u, p_k)^2 / 
  \tau_d)}{\sum_{j=1}^K \exp(-\mathcal{W}_2(q_u, p_j)^2 / 
  \tau_d)},
\end{equation}
where $\tau_d$ is a temperature hyperparameter. The 
resulting $\mathbf{w}_u$ encodes an~uncertainty-aware soft assignment over preference bases, yielding the personalized reward in Eq.~\eqref{eq:r_u(xy)}.
\subsection{Training Objectives}
\label{subsec:training_objectives}
%
\paragraph{Variance-Attenuated Bradley-Terry Loss.}
Under the original Bradley-Terry loss $\mathcal{L}_{\mathrm{BT}}$, the pairwise preference probability is computed as $P(y^+ \succ y^- \mid x) = \sigma(\Delta r_\mathrm{BT})$, where $\Delta r_\mathrm{BT} = r_\theta(x, y^+) - r_\theta(x, y^-)$ is a scalar reward difference.
However, this formulation does not account for uncertainty in reward estimation.
To address this, we propose a variance-attenuated preference loss that replaces the scalar $\Delta r_\mathrm{BT}$ with a Gaussian $\Delta r_u \sim \mathcal{N}(\mu_\Delta, \sigma^2_\Delta)$.

%
ddLet $\Delta\boldsymbol{\phi} = [\Delta\phi_1, \dots, \Delta\phi_K]^\top \in \mathbb{R}^K$ denote the reward basis differences, where $\Delta\phi_k = \phi_k(x, y^+) - \phi_k(x, y^-)$. 
Given the user weight $\mathbf{w}_u$, the Gaussian parameters are computed as:
\begin{equation}
    \mu_\Delta = \sum_{k=1}^K w_{u,k} \Delta\phi_k,
    \quad
    \sigma_\Delta^2 = \sum_{k=1}^K w_{u,k} (\Delta\phi_k - \mu_\Delta)^2,
\end{equation}
where $\mu_\Delta$ and $\sigma_\Delta^2$ are the mean and variance\footnote{$\sigma_u$ and $\sigma_\Delta$
capture uncertainty at different levels: $\sigma_u$ over the user preference
distribution $q_u$, and $\sigma_\Delta$ over the reward estimate for a specific pair given $\mathbf{w}_u$.} of the reward basis differences weighted by $\mathbf{w}_{u}$.
We approximate $\Delta r_u \sim \mathcal{N}(\mu_\Delta, \sigma^2_\Delta)$ via moment matching~\citep{minka2001expectation}.
The pairwise preference probability of user $u$ then becomes $P(y^+ \succ y^- \mid x, u) = \mathbb{E}_{\Delta r_u}[\sigma(\Delta r_u)]$.
As this expectation is analytically intractable\footnote{$\mathbb{E}_{\Delta r_u}[\sigma(\Delta r_u)] = \int_{-\infty}^{\infty} \sigma(\Delta r_u) \mathcal{N}(\Delta r_u \mid \mu_\Delta, \sigma_\Delta^2) \, d(\Delta r_u)$, which has no analytic solution.}, we apply MacKay's Approximation~\citep{mackay1992evidence} to derive a closed-form variance-attenuated Bradley-Terry loss:

\begin{equation}\label{eq:l_vbt}
    \mathcal{L}_{\mathrm{VBT}}(\mathbf{w}, \boldsymbol{\phi}, \mathcal{P}) = - \mathbb{E}_{u \sim \mathcal{U}} \, \mathbb{E}_{(x, y^+, y^-) \sim \mathcal{D}_u} \!\left[ \log \sigma\!\left( \tfrac{\mu_\Delta}{\sqrt{1 + \pi \sigma_\Delta^2 / 8}} \right) \right].
\end{equation}

$\mathcal{L}_{\mathrm{VBT}}$ adaptively scales the gradient via $\sigma^2_\Delta$, which depends on the concentration of $\mathbf{w}_u$: (1) when $\mathbf{w}_u$ is concentrated (e.g., one-hot), $\sigma^2_\Delta = 0$ and $\mathcal{L}_{\mathrm{VBT}}$ recovers $\mathcal{L}_{\mathrm{BT}}$;
(2) as $\mathbf{w}_u$ becomes more diffuse, $\sigma^2_\Delta$ grows and attenuates the gradient (see Appendix~\ref{app:concavity} and~\ref{app:proof_vbt}).
Consequently, $\mathcal{L}_{\mathrm{VBT}}$ enables uncertainty-aware reward estimation.

\paragraph{Variational Training Objective.}
Following variational inference~\citep{DBLP:journals/corr/KingmaW13}, we derive the final training objective as:

\begin{equation}
    \mathcal{L}(\mathbf{w}, \boldsymbol{\phi}, \mathcal{P}) = \mathcal{L}_{\mathrm{VBT}} + \mathcal{L}_{\mathrm{SEP}} + \beta \cdot \mathcal{L}_{\mathrm{REG}}.
    \label{eq:total}
\end{equation}

$\mathcal{L}_{\mathrm{SEP}} = \sum_{1 \le i < j \le K} \max\!\left(0,\, \tau_{\mathrm{m}} - \mathcal{W}_2(p_i, p_j)^2\right)$ promotes diversity among the $K$ preference 
bases with margin $\tau_m$.
%
$\mathcal{L}_{\mathrm{REG}} = \mathbb{E}_{u \sim \mathcal{U}}[D_{\mathrm{KL}}(q_u \parallel p(\mathbf{z}))]$~regularizes $q_u$ to lie within the learned preference space using a mixture-of-Gaussians prior\footnote{Since $D_{\mathrm{KL}}$ against a mixture-of-Gaussians prior lacks a closed form, we estimate it via Monte Carlo sampling with the reparameterization trick (see Appendix~\ref{app:mc}).} $p(\mathbf{z}) = \frac{1}{K}\sum_{k=1}^K p_k$ instead of a standard $\mathcal{N}(\mathbf{0}, \mathbf{I})$.
$\beta$ controls the regularization strength.

\subsection{Few-Shot Inference for Unseen Users}
\label{subsub:inference}
At inference time, the basis reward functions $\boldsymbol{\phi}$ are known, but the user weight $\mathbf{w}_{u'}$ for an unseen user $u'$ is derived on-the-fly.\footnote{For seen users, the learned weights $\mathbf{w}_u$ are cached during training and applied directly.}
Given a small reference set $\mathcal{C}_{u'}$, the variational encoder infers the user distribution $q_{u'}$ (\cref{subsec:var_user_encoder}), from which $\mathbf{w}_{u'}$ is derived (\cref{subsec:user_preference_modeling}). 
The personalized reward is then obtained via Eq.~\eqref{eq:r_u(xy)}.
Unlike methods requiring gradient-based optimization~\citep{LoRe,PAL,PReF}, \proposed estimates $\mathbf{w}_{u'}$ via a single forward pass without additional optimization.

\section{Experiments}
\label{sec:experiments}




\subsection{Experimental Setup}
\label{sec:setup}
\paragraph{Datasets.}
We evaluate on three benchmarks: \textbf{PersonalLLM}~\citep{personalllm}, where $\alpha \in \{0.001, 0.01, 0.1\}$ controls preference diversity; \textbf{TL;DR}~\citep{stiennon2020learning}, where $n \in \{100, 150\}$ denotes the number of training triplets per seen user; and \textbf{PRISM}~\citep{prism}.
Detailed statistics are provided in Appendix~\ref{appendix:datasets}.

\paragraph{Evaluation Metric.}
We report pairwise preference accuracy as the primary metric: $\text{Acc} = \frac{1}{|\mathcal{D}_{\text{test}}|} \sum_{(x, y^+, y^-, u) \in \mathcal{D}_{\text{test}}} \mathbb{1}[r_u(x, y^+) > r_u(x, y^-)]$.
Results are averaged over 5 random seeds, controlling model initialization and data splits.
Performance is evaluated across three distinct splits: Seen, Unseen, and Overall (the macro-average of Seen and Unseen).

\paragraph{Baselines.}
We compare \proposed with several baselines: the pretrained Ref~\citep{qwen3}, non-personalized BT~\citep{stiennon2020learning, ouyang2022training}, and personalized methods including VPL~\citep{VPL}, PAL~\citep{PAL}, LoRe~\citep{LoRe}, and PReF~\citep{PReF}.
Detailed explanations of each baseline are in Appendix~\ref{appendix:baselines}.

\paragraph{Implementation Details.}
We employ Qwen3-0.6B \citep{qwen3} as the backbone for the basis reward functions\footnote{$\boldsymbol{\phi}$ comprises the backbone and a two-layer 
MLP ($H \to H/4 \to K$) on the last-token hidden state.} $\boldsymbol{\phi}$, fully fine-tuned given its compact size.
For \proposed and VPL, the backbone is additionally used as the variational encoder without gradient updates.\footnote{As the shared backbone is updated through $\boldsymbol{\phi}$, the encoder reflects preference learning without additional backward passes.} 
All baselines are tuned following their original protocols.
Detailed hyperparameter configurations are provided in Appendix~\ref{appendix:implementation_details}.
Backbone scale from 0.6B to 8B is analyzed in Appendix~\ref{app:backbone_scaling}.

\subsection{Performance Comparison}
\label{subsec:preformance comparison}
\begin{table*}[t]
\centering
\caption{Overall performance comparison on three datasets.}
\label{tab:main_combined}
\small
\setlength{\tabcolsep}{4.5pt}
\begin{tabular}{l|ccc|ccc|ccc}
\toprule
 & \multicolumn{9}{c}{\textbf{PersonalLLM}} \\
\textbf{Method} & \multicolumn{3}{c|}{Very Diverse ($\alpha{=}0.001$)} & \multicolumn{3}{c|}{Moderately Diverse ($\alpha{=}0.01$)} & \multicolumn{3}{c}{Near Uniform ($\alpha{=}0.1$)} \\
& Seen & Unseen & Overall & Seen & Unseen & Overall & Seen & Unseen & Overall \\
\midrule\midrule
Ref & 58.6\std{0.2} & 58.9\std{0.9} & 58.8\std{0.4} & 59.1\std{0.3} & 59.0\std{0.6} & 59.1\std{0.4} & 59.9\std{0.2} & 60.0\std{0.2} & 60.0\std{0.2} \\
BT  & 90.3\std{0.3} & \underline{90.5\std{0.4}} & \underline{90.4\std{0.2}} & 91.1\std{0.4} & \underline{91.0\std{0.3}} & \underline{91.1\std{0.2}} & \underline{94.8\std{0.2}} & \underline{94.6\std{0.4}} & \underline{94.7\std{0.2}} \\
VPL & 86.4\std{0.4} & 86.9\std{0.5} & 86.6\std{0.2} & 87.3\std{0.5} & 87.4\std{0.3} & 87.3\std{0.2} & 91.7\std{0.2} & 91.5\std{0.5} & 91.6\std{0.3} \\
PAL & 88.8\std{0.5} & 89.3\std{0.1} & 89.0\std{0.3} & 89.3\std{0.5} & 89.2\std{0.3} & 89.2\std{0.2} & 93.0\std{0.2} & 92.8\std{0.4} & 92.9\std{0.3} \\
LoRe & 89.4\std{1.2} & 88.6\std{0.3} & 89.0\std{0.5} & 89.6\std{0.9} & 88.7\std{0.2} & 89.2\std{0.6} & 92.7\std{0.2} & 92.5\std{0.3} & 92.6\std{0.2} \\
PReF & \underline{94.4\std{0.2}} & 81.1\std{0.1} & 87.7\std{0.1} & \underline{94.0\std{0.3}} & 82.4\std{0.5} & 88.2\std{0.2} & 94.3\std{0.1} & 90.4\std{0.4} & 92.3\std{0.2} \\
\midrule
\textbf{VRF} & \textbf{95.6\std{0.4}} & \textbf{95.1\std{0.3}} & \textbf{95.3\std{0.3}} & \textbf{95.5\std{0.1}} & \textbf{94.7\std{0.2}} & \textbf{95.1\std{0.1}} & \textbf{96.8\std{0.0}} & \textbf{96.4\std{0.2}} & \textbf{96.6\std{0.1}} \\
\midrule\midrule
\multirow{2}{*}{\textbf{Method}} & \multicolumn{3}{c|}{\textbf{TL;DR ($n{=}100$)}} & \multicolumn{3}{c|}{\textbf{TL;DR ($n{=}150$)}} & \multicolumn{3}{c}{\textbf{PRISM}} \\
& Seen & Unseen & Overall & Seen & Unseen & Overall & Seen & Unseen & Overall \\
\midrule\midrule
Ref & 51.9\std{0.2} & 51.8\std{0.7} & 51.8\std{0.3} & 51.9\std{0.2} & 51.8\std{0.7} & 51.8\std{0.3} & 54.7\std{0.7} & 54.9\std{0.3} & 54.8\std{0.5} \\
BT  & 59.9\std{0.7} & 59.0\std{0.7} & 59.5\std{0.7} & 61.5\std{0.6} & 60.4\std{0.6} & 60.9\std{0.5} & 60.0\std{0.7} & 60.2\std{0.9} & 60.1\std{0.5} \\
VPL & 58.6\std{1.0} & 57.5\std{0.4} & 58.1\std{0.7} & 58.7\std{1.2} & 56.7\std{0.3} & 57.7\std{0.5} & 59.6\std{0.7} & 58.8\std{1.0} & 59.2\std{0.7} \\
PAL & 60.4\std{0.7} & 59.1\std{0.3} & 59.8\std{0.4} & 61.7\std{0.5} & 60.6\std{0.4} & 61.2\std{0.1} & \underline{60.8\std{0.4}} & \underline{60.8\std{0.9}} & \underline{60.8\std{0.5}} \\
LoRe & \underline{61.6\std{0.5}} & \underline{61.0\std{0.6}} & \underline{61.3\std{0.6}} & \underline{62.3\std{0.8}} & \underline{61.0\std{0.7}} & \underline{61.7\std{0.6}} & 60.4\std{1.0} & 59.7\std{0.9} & 60.0\std{0.9} \\
PReF & 55.7\std{2.5} & 56.7\std{2.2} & 56.2\std{1.9} & 59.9\std{1.3} & 59.2\std{1.8} & 59.5\std{1.5} & 53.1\std{1.6} & 51.4\std{1.2} & 52.3\std{1.3} \\
\midrule
\textbf{VRF} & \textbf{62.3\std{1.2}} & \textbf{61.5\std{0.9}} & \textbf{61.9\std{1.0}} & \textbf{63.3\std{1.1}} & \textbf{62.4\std{1.1}} & \textbf{62.9\std{1.1}} & \textbf{62.1\std{0.6}} & \textbf{61.6\std{0.5}} & \textbf{61.8\std{0.3}} \\
\bottomrule
\end{tabular}
\end{table*}
\paragraph{Main Results.}
Table~\ref{tab:main_combined} shows that \proposed consistently achieves the best performance across all datasets and splits, indicating its effectiveness for personalized reward modeling across synthetic and real-world settings.
The gains over BT highlight the importance of user-specific modeling, while improvements over VPL and PAL suggest that combining variational user representations with shared preference bases provides complementary benefits.
Unlike LoRe and PReF, which rely on deterministic user representations, \proposed represents both users and preference bases as probabilistic distributions, enabling uncertainty-aware preference modeling.
This allows the model to quantify preference uncertainty and attenuate ambiguous preference signals during training, leading to more robust preference learning.
%
Notably, on PersonalLLM, the performance gap widens as user preferences become more diverse ($\alpha{=}0.001$), suggesting that \proposed more effectively captures heterogeneous user preferences.
\paragraph{In-Depth Comparison.}
Figure~\ref{fig:combined} examines unseen users in PRISM across four dimensions: (a) few-shot adaptation, (b) uncertainty robustness, (c) inference-time alignment, and (d) adaptation efficiency.
For (a) and (b), PReF is excluded due to its unstable performance.
Note that (a) evaluates reward model accuracy by varying $|\mathcal{C}_u|$, while (c) measures downstream alignment quality via best-of-$N$ sampling, grouping users by their original reference set size.
Additional few-shot adaptation and uncertainty robustness results on PersonalLLM and TL;DR are provided in Appendix~\ref{appendix:uncertain_fewshot}.
%
\begin{enumerate}[label=(\alph*), leftmargin=*, nosep]
    \item \textbf{Few-Shot Adaptation.}
    Figure~\ref{fig:fewshot} reports accuracy by varying $|\mathcal{C}_u| \in \{1, 3, 5, 7, 9\}$.
    \proposed achieves the highest accuracy across all values of $|\mathcal{C}_u|$.
    Notably, the improvement over the best competitor (PAL) is largest at $|\mathcal{C}_u| =1$ ({+}1.9\%), indicating that \proposed effectively infers user preferences even from minimal user history.
    This result suggests that shared preference bases enable data-scarce users to leverage collective knowledge from the user population. The variational encoder, regularized by the mixture-of-Gaussians prior, further maps users within this learned preference space.
    Together, these components address the isolated estimation problem (C1).

    \item \textbf{Uncertainty Robustness.}
    Figure~\ref{fig:uncert} categorizes unseen users into three bins (low, mid, high) based on their preference uncertainty
    $\sigma_u$.\footnote{To obtain a single uncertainty score per user, we average $\boldsymbol{\sigma}_u \in \mathbb{R}^D$ across dimensions.}
    \proposed consistently outperforms all baselines across all uncertainty levels.
    This robustness stems from our uncertainty-aware distributional matching  between users and preference bases and variance-attenuated training, jointly mitigating the unreliable inference from deterministic point estimates~(C2).
    

    \item \textbf{Inference-Time Alignment.}
    We evaluate inference-time alignment via best-of-$N$ sampling  to assess whether improved reward estimation translates to better response selection (see Appendix~\ref{appendix:inf_setup} for detailed setup).
    Figure~\ref{fig:winrate} shows that \proposed consistently achieves the highest $\Delta$ Win Rate across all user groups.
    Even with only 3--5 reference pairs, \proposed outperforms all baselines.
    These results suggest that the accuracy gains observed in (a) and (b) translate to tangible improvements in downstream alignment.
    
    \item \textbf{Adaptation Efficiency.}
    Figure~\ref{fig:adpat_time} reports per-user adaptation latency.
    'Backbone' denotes the time for LLM forward pass over the reference set, and 'Adapt' measures the time for estimating user weights.
    \proposed and VPL compute user weights via a single forward pass (0.6ms and 0.2ms, respectively), whereas gradient-based methods (PAL, LoRe, PReF) require 83--114ms.
    This advantage becomes critical at scale; for 1M users, \proposed's adaptation completes in under 10 minutes, whereas PReF requires over 30 hours.
    \proposed thus enables efficient personalization with negligible adaptation overhead, making it practical for large-scale deployment.
\end{enumerate}
Taken together, \proposed effectively addresses both the isolated estimation (C1) and unreliable inference (C2) challenges across few-shot scenarios and diverse uncertainty levels.
These gains translate to downstream personalized alignment while maintaining practical efficiency.

\begin{figure*}[t]
    \centering
    \begin{minipage}[b]{0.48\linewidth}
        \centering
        \includegraphics[width=\linewidth]{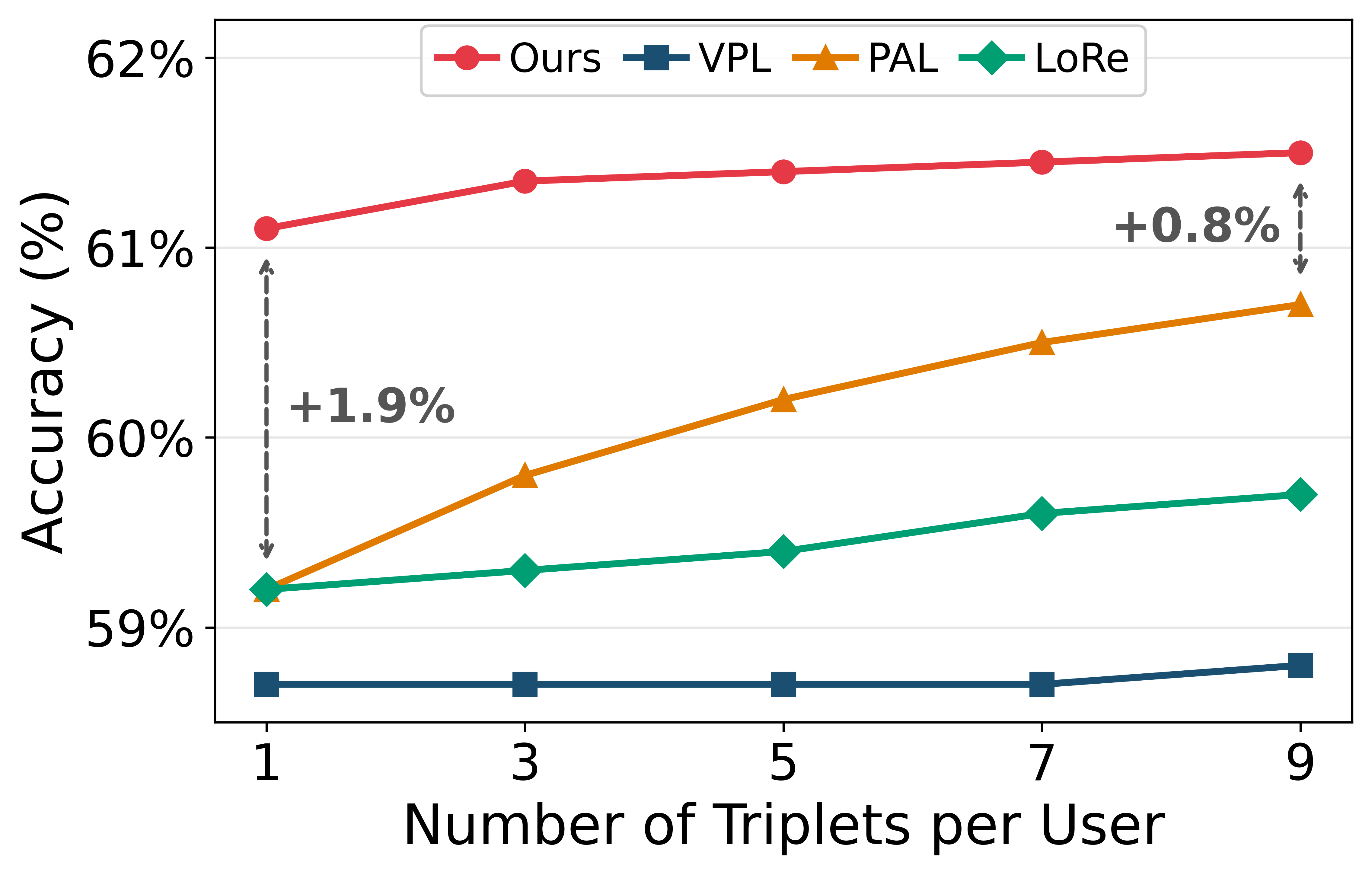}
        \subcaption{Few-shot adaptation (Varying $|\mathcal{C}_u|$)}
        \label{fig:fewshot}
    \end{minipage}
    \hfill
    \begin{minipage}[b]{0.48\linewidth}
        \centering
        \includegraphics[width=\linewidth]{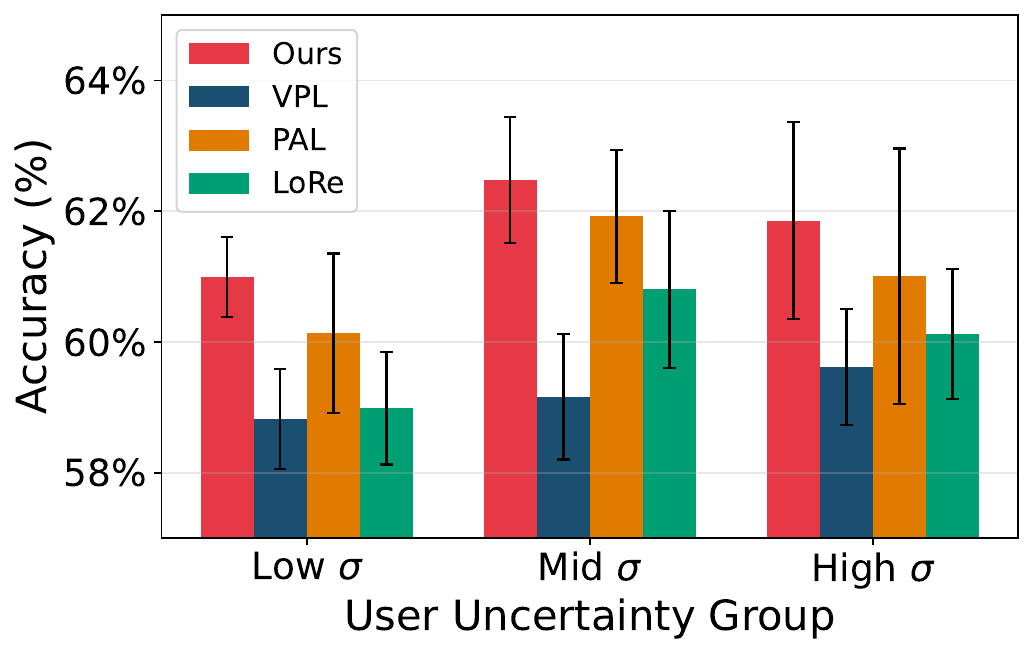}
        \subcaption{Uncertainty robustness}
        \label{fig:uncert}
    \end{minipage}

    \vspace{5pt}

    \begin{minipage}[b]{0.48\linewidth}
        \centering
        \includegraphics[width=\linewidth]{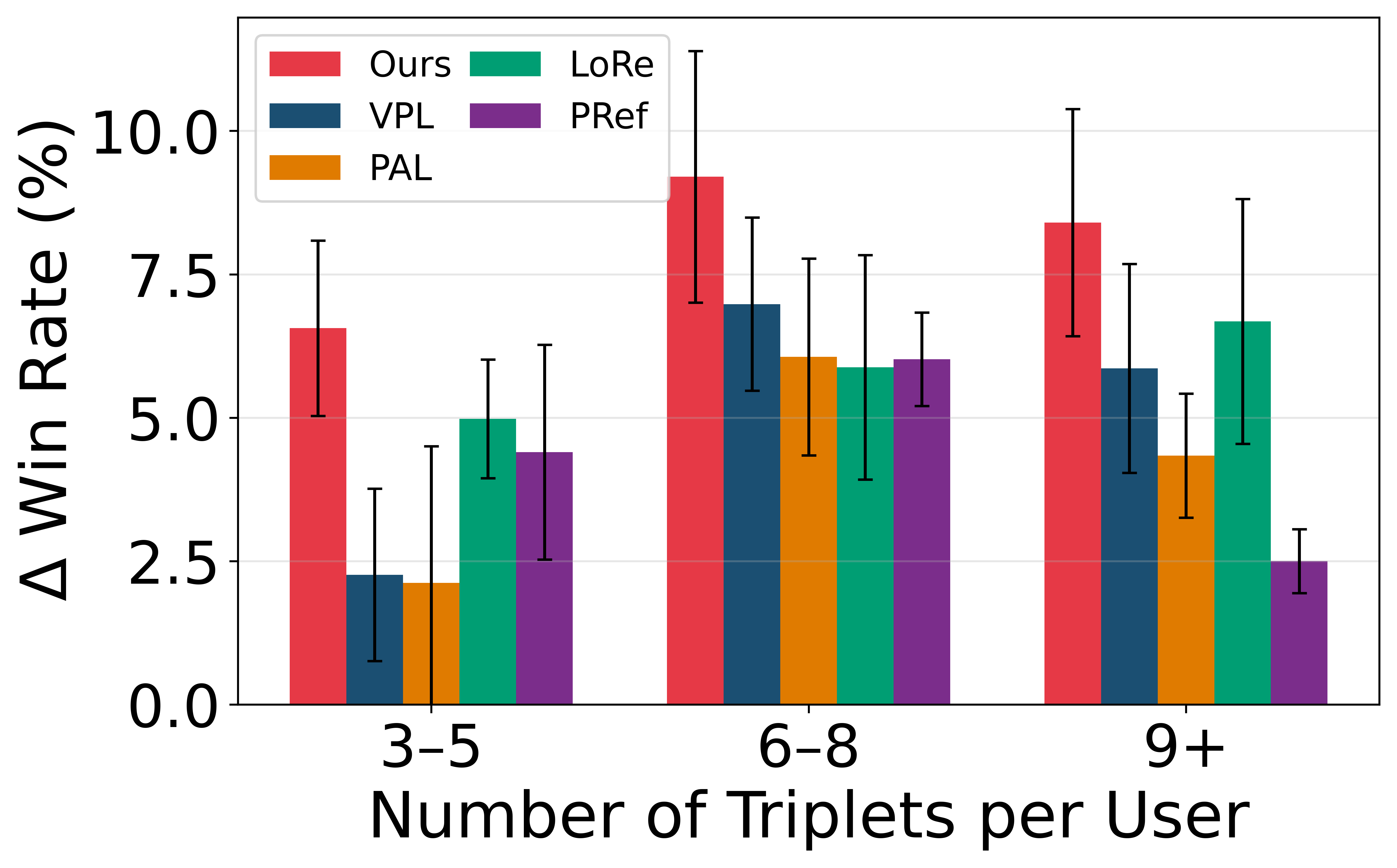}
        \subcaption{Inference-time alignment (grouped by $|\mathcal{C}_u|$)}
        \label{fig:winrate}
    \end{minipage}
    \hfill
    \begin{minipage}[b]{0.48\linewidth}
        \centering
        \includegraphics[width=\linewidth]{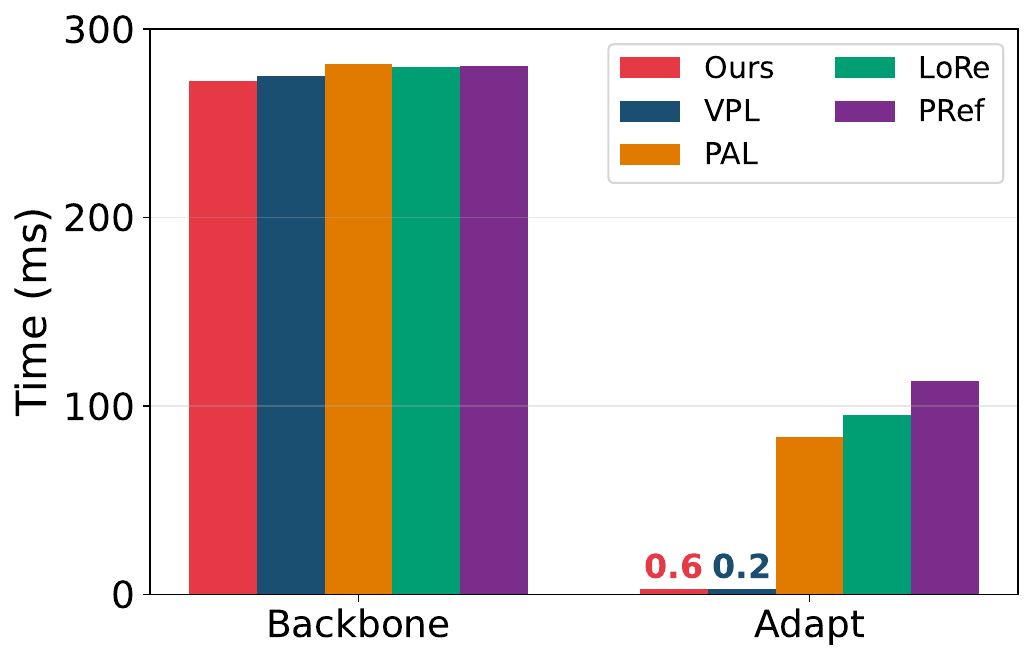}
        \subcaption{Per-user adaptation time (ms)}
        \label{fig:adpat_time}
    \end{minipage}

    \caption{In-depth analysis of unseen users in PRISM. \proposed maintains superior accuracy across all user groups (a--c) while significantly reducing adaptation latency (d).}
    \label{fig:combined}
\end{figure*}

\begin{table}[t]
\centering
\caption{Calibration of predicted preference probabilities on the overall split.
Lower is better for all metrics; ECE uses ten equal-width bins.}
\label{tab:calibration}
\small
\setlength{\tabcolsep}{6pt}
\begin{tabular}{l l | ccc}
\toprule
\textbf{Dataset} & \textbf{Model}
& \textbf{NLL} ($\downarrow$) & \textbf{Brier} ($\downarrow$) & \textbf{ECE} ($\downarrow$) \\
\midrule\midrule
\multirow{2}{*}{PersonalLLM ($\alpha{=}0.001$)}
  & BT & 0.2584\std{0.0069} & 0.0734\std{0.0017} & 0.0162\std{0.0034} \\
  & \proposed & \textbf{0.1747}\std{0.0195} & \textbf{0.0464}\std{0.0027} & \textbf{0.0099}\std{0.0095} \\
\midrule

\multirow{2}{*}{PRISM}
  & BT & 0.6862\std{0.0071} & 0.2402\std{0.0027} & 0.0844\std{0.0075} \\
  & \proposed & \textbf{0.6348}\std{0.0049} & \textbf{0.2237}\std{0.0018} & \textbf{0.0269}\std{0.0127} \\
\midrule

\multirow{2}{*}{TL;DR ($n{=}100$)}
  & BT & 0.7572\std{0.0399} & 0.2600\std{0.0086} & 0.1353\std{0.0292} \\
  & \proposed & \textbf{0.6862}\std{0.0310} & \textbf{0.2394}\std{0.0099} & \textbf{0.0761}\std{0.0348} \\
\midrule
\multirow{2}{*}{TL;DR ($n{=}150$)}
  & BT & 0.7277\std{0.0488} & 0.2499\std{0.0103} & \textbf{0.1155}\std{0.0356} \\
  & \proposed & \textbf{0.7203}\std{0.0183} & \textbf{0.2452}\std{0.0060} & 0.1216\std{0.0087} \\
\bottomrule
\end{tabular}
\end{table}
\paragraph{Uncertainty Calibration.}
To examine whether \proposed's uncertainty-aware modeling translates into
well-calibrated preference predictions, we compare it against BT on three
calibration metrics: NLL \citep{gneiting2007strictly}, Brier score
\citep{brier1950verification}, and ECE \citep{guo2017calibration}.
BT predicts a deterministic preference probability (Eq.~\ref{eq:r_bt}), whereas \proposed attenuates it by the reward variance (Eq.~\ref{eq:l_vbt}).
Table~\ref{tab:calibration} shows that \proposed achieves better calibration than BT across most settings, since
attenuating the prediction under high variance prevents overconfidence
when a user's preference is inferred from limited observations.
The gain is large on PersonalLLM ($\alpha{=}0.001$) and PRISM, where ECE drops
by 39\% and 68\%. These are the settings with the most diverse preferences and
the fewest observations per user.
A controlled comparison between the two TL;DR settings, which share the same users and differ only in the number of training triplets per user, shows that the gain is larger under sparser data ($n{=}100$).
Overall, \proposed's uncertainty-aware preference modeling yields more reliable
predictions than deterministic point estimates (C2).

\subsection{Study of \proposed}
\label{subsec:study of ours}
\paragraph{Ablation Study.}
\label{sec:ablation}

Table~\ref{tab:ablation} reports the contribution of each component in \proposed.
(a) Removing $\mathcal{L}_{\text{VBT}}$ (replaced with $\mathcal{L}_{\text{BT}}$)
consistently degrades accuracy across all datasets, confirming its role in adaptively scaling the gradients under preference uncertainty.
(b) Removing $\mathcal{L}_{\mathrm{SEP}}$ causes the largest drop on PersonalLLM ($-3.6\%$), indicating that basis separation is critical for capturing distinct preferences.
(c) Removing $\mathcal{L}_{\mathrm{REG}}$ results in smaller but consistent degradation across all datasets.
(d) Replacing the mixture-of-Gaussians (MoG) prior with $\mathcal{N}(\mathbf{0}, \mathbf{I})$ degrades performance on all datasets, 
highlighting the benefit of regularizing user distributions to lie within the learned preference space.
(e) Replacing the product-of-Gaussians (PoG) with mean pooling degrades performance, showing that uncertainty-aware aggregation of user observations yields more reliable estimation.
Since preference bases are integral to \proposed and cannot be ablated, we analyze sensitivity to the number of bases ($K$), along with the regularization strength ($\beta$), in Appendix~\ref{appendix:hyperparams}.

\begin{table*}[t]
\centering
\caption{Ablation study on \proposed across three datasets.}
\label{tab:ablation}
\small
\setlength{\tabcolsep}{3.5pt}
\resizebox{\textwidth}{!}{
\begin{tabular}{l | ccc | ccc | ccc}
\toprule
\multirow{2}{*}{\textbf{Model Variant}} & \multicolumn{3}{c|}{\textbf{PersonalLLM ($\alpha{=}0.001$)}}
& \multicolumn{3}{c|}{\textbf{TL;DR ($n{=}150$)}}
& \multicolumn{3}{c}{\textbf{PRISM}} \\
& {Seen} & {Unseen} & {Overall} & {Seen} & {Unseen} & {Overall} & {Seen} & {Unseen} & {Overall} \\
\midrule\midrule

(a) w/o $\mathcal{L}_{\text{VBT}}$
  & 92.7\std{0.3} & 92.5\std{0.7} & 92.6\std{0.3}
  & 61.4\std{0.5} & 60.8\std{0.3} & 61.1\std{0.4}
  & 58.3\std{0.2} & 58.1\std{0.5} & 58.2\std{0.4} \\
(b) w/o $\mathcal{L}_{\text{SEP}}$
  & 91.9\std{0.3} & 91.4\std{0.7} & 91.7\std{0.3}
  & 61.5\std{0.6} & 60.8\std{0.3} & 61.2\std{0.5}
  & 58.2\std{0.1} & 58.2\std{0.5} & 58.2\std{0.3} \\
(c) w/o $\mathcal{L}_{\text{REG}}$
  & 93.2\std{0.2} & 93.5\std{0.3} & 93.3\std{0.2}
  & 61.2\std{0.6} & 60.3\std{0.4} & 60.8\std{0.4}
  & 60.8\std{0.4} & 60.8\std{0.9} & 60.8\std{0.7} \\
(d) w/o MoG 
  & 91.9\std{0.3} & 91.4\std{0.7} & 91.7\std{0.2}
  & 61.3\std{0.5} & 60.8\std{0.3} & 61.1\std{0.4}
  & 60.2\std{0.3} & 60.3\std{0.8} & 60.2\std{0.6} \\
(e) w/o PoG
  & 94.8\std{0.2} & 95.0\std{0.2} & 94.9\std{0.2}
  & 62.7\std{0.2} & 61.8\std{0.4} & 62.2\std{0.3}
  & 61.7\std{0.5} & 60.6\std{0.4} & 61.1\std{0.2} \\
\midrule
\textbf{Full Model}
  & \textbf{95.6}\std{0.4} & \textbf{95.1}\std{0.3} & \textbf{95.3}\std{0.3}
  & \textbf{63.3}\std{1.1} & \textbf{62.4}\std{1.1} & \textbf{62.9}\std{1.1}
  & \textbf{62.1}\std{0.6} & \textbf{61.6}\std{0.5} & \textbf{61.8}\std{0.3} \\
\bottomrule
\end{tabular}
}
\vspace{-0.3cm}
\end{table*}

\label{sec:}
\begin{figure}
    \hspace{-0.2cm}
    \includegraphics[width=\linewidth]{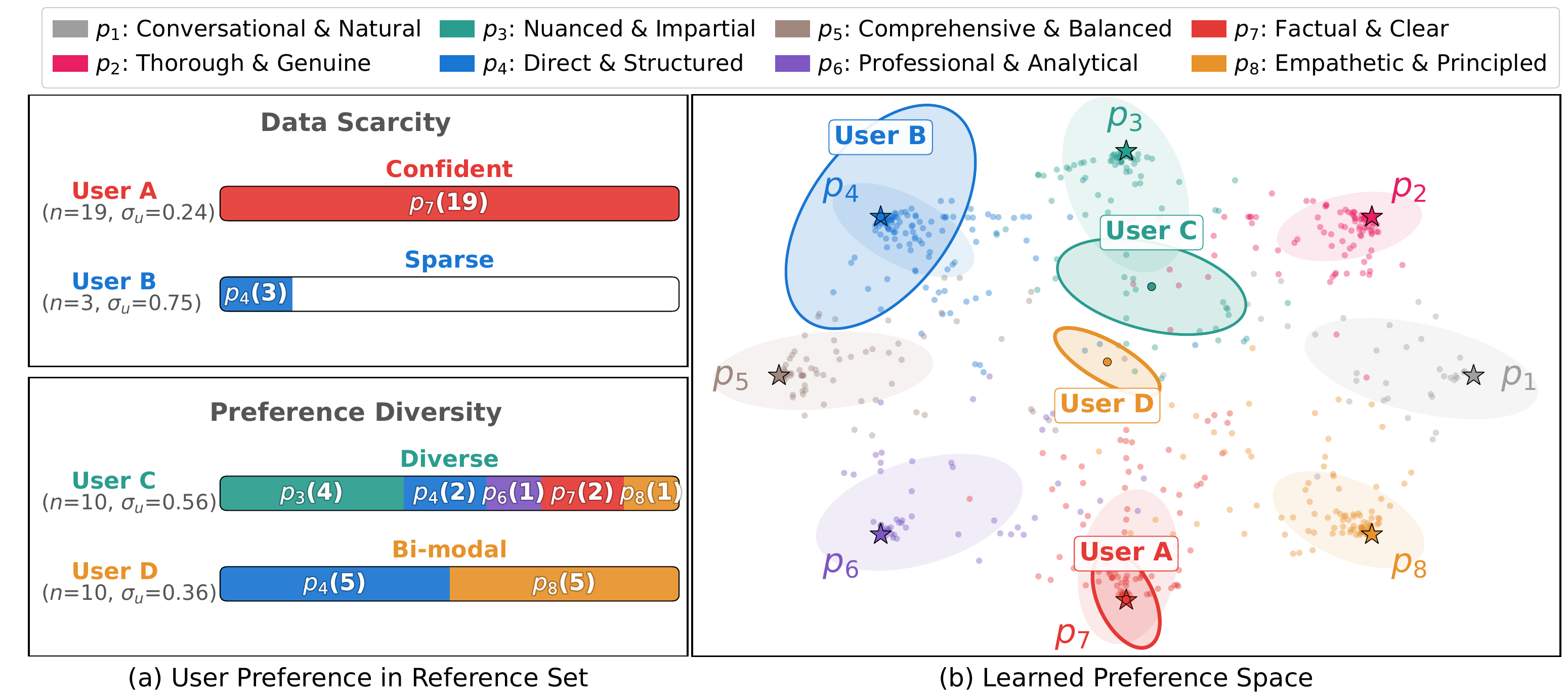}
   \caption{Case studies and the learned preference space on PRISM ($K{=}8$). 
   (a) Four users with varying data scarcity and preference diversity. $n$ is the reference set size.
   Bars show the preference distribution classified by Claude Sonnet 4.6 (see Appendix~\ref{app:prompt_preference} for details).
   (b) Learned preference space. Stars and shaded regions denote the preference bases.
   For user ellipses, the center point represents $\mathbf{w}_u$, and the size reflects ${\sigma}_u$.}
    \label{fig:pref_space}
\end{figure}
\paragraph{Case Studies on Learned Preference Space.}
Figure~\ref{fig:pref_space}a shows four users across two axes. 
In terms of data scarcity, both User~A ($n{=}19$) and User~B ($n{=}3$) exhibit consistent preferences, but the limited observations for User~B result in higher uncertainty ($\sigma_u{=}0.75$).
In terms of preference diversity, User~C and User~D have the same reference set size ($n{=}10$), yet User~C spreads across five bases while User~D concentrates on two ($p_4$, $p_8$), resulting in higher uncertainty for User~C ($\sigma_u{=}0.56$).

Figure~\ref{fig:pref_space}b shows how these users' preference distributions are represented in the learned preference space. 
User A exhibits a tight ellipse near $p_7$, reflecting high confidence from abundant, consistent data. 
User B, despite concentrating on $p_4$, shows a large ellipse due to data scarcity. 
User C spans multiple bases, capturing diverse preferences.
User D sits between $p_4$ and $p_8$ with moderate uncertainty. 
This visualization shows that \proposed effectively disentangles what a user prefers from how confident the estimate~is.

\section{Conclusion}
\label{sec:conclusion}
In this paper, we present \proposed for uncertainty-aware LLM personalization, addressing two key challenges in reward factorization.
For \textbf{C1}, \proposed introduces shared preference bases, allowing data-scarce and unseen users to leverage collective preference patterns.
For \textbf{C2}, VRF represents both users and bases as Gaussians for uncertainty-aware distributional matching, further enhanced by variance-attenuated training.
Experiments on three benchmarks show consistent improvements over all baselines, robustness to scarcity and uncertainty, and gains in downstream alignment. 
We believe \proposed provides an effective and scalable path for reliable LLM personalization.
Future directions include query-adaptive personalization across diverse contexts and temporal adaptation to evolving user preferences.




\section*{Acknowledgments}
This work used Delta at the National Center for Supercomputing Applications
through allocation CIS260153 from the Advanced Cyberinfrastructure
Coordination Ecosystem: Services \& Support (ACCESS)
program~\citep{boerner2023access}, which is supported by U.S. National Science
Foundation grants \#2138259, \#2138286, \#2138307, \#2137603, and \#2138296.
Jiawei Han's research is supported by NSF-2505932, NSF IIS 25-37827, and NSF-2118329.

\bibliography{colm2026_conference}
\bibliographystyle{colm2026_conference}

\newpage
\appendix
\section{Experimental Setup}
\subsection{Datasets}
\label{appendix:datasets}
\begin{table}[t]
\centering

\caption{Average dataset statistics per user. For seen users, $C_u$ is sampled from training data per epoch, and the full training set serves as $C_u$ at evaluation. For unseen users, only $C_u$ is provided at inference to estimate user weights. Both settings are evaluated on held-out $D_u$.}
\label{tab:dataset_stats}
\begin{tabular}{l | ccccc | ccc}
\toprule
\multirow{3}{*}{\textbf{Dataset}} & \multicolumn{5}{c|}{\textbf{Seen}} & \multicolumn{3}{c}{\textbf{Unseen}} \\
\cmidrule(lr){2-6}\cmidrule(lr){7-9}
& \multirow{2}{*}{\#Users} & \multicolumn{2}{c}{Training} & \multicolumn{2}{c|}{Evaluation} & \multirow{2}{*}{\#Users} & \multicolumn{2}{c}{Evaluation}  \\
&  & {$|\mathcal{C}_u|$} & {$|\mathcal{D}_u|$} & {$|\mathcal{C}_u|$} & {$|\mathcal{D}_u|$} &  & {$|\mathcal{C}_{u}|$} & {$|\mathcal{D}_{u}|$} \\
\midrule\midrule
PersonalLLM        & 1{,}000 & 9.0   & 36.0  & 45.0  & 45.0      & 1{,}000 &  9.0 &  9.0 \\
TL;DR ($n{=}100$)  &      20 & 50.0  & 50.0  & 100.0 & 1{,}449.8 &      20 & 50.0 & 1{,}146.4 \\
TL;DR ($n{=}150$)  &      20 & 50.0  & 100.0 & 150.0 & 1{,}449.8 &      20 & 50.0 & 1{,}146.4 \\
PRISM              &     623 & 3.0   & 6.1   & 9.1   & 8.6       &     623 &  9.6 &  9.1 \\
\bottomrule
\end{tabular}
\label{tab:dataset}
\end{table}

Following state-of-the-art reward factorization work~\citep{PAL, LoRe, PReF}, we evaluate on three benchmarks spanning synthetic and real-world personalized preference data. Detailed statistics are provided in Table~\ref{tab:dataset}.
\begin{itemize}[leftmargin=*] \vspace{-\topsep}
    \item
    \textbf{PersonalLLM}~\citep{personalllm} provides 9,402 prompts, each with 8 candidate responses scored by 10 reward models (i.e., $r \in \mathbb{R}^{10}$). 
    To construct personalized preference pairs, we synthesize a user preference vector $\mathbf{w} \sim \mathrm{Dirichlet}(\alpha) \in \mathbb{R}^{10}$ and compute a composite score $s = \mathbf{w}^\top \mathbf{r}$.
    For each prompt, we form a preference pair from the highest- and lowest-scoring responses.
    We synthesize 2,000 users, equally split into seen and unseen.
    We evaluate across three diversity levels, $\alpha \in \{0.001, 0.01, 0.1\}$, where a smaller $\alpha$ produces more heterogeneous preferences, making personalization more critical.

    \item
    \textbf{TL;DR}~\citep{stiennon2020learning} contains pairwise human preference data over Reddit post summaries, organized by individual annotators.
    We retain annotators with at least 50 annotations and randomly select 40, partitioning them into 20 seen and 20 unseen users. 
    Seen users have $n \in \{100, 150\}$ triplets for training.
    \item
    \textbf{PRISM}~\citep{prism} collects multi-turn conversations between crowdworkers and LLMs, where annotators score multiple model responses per interaction. We use the demographically balanced subset (623 seen / 623 unseen users) and construct pairwise preference data from each interaction using the highest- and lowest-scored responses.
\end{itemize}

\subsection{Baselines}
\label{appendix:baselines}
We compare against state-of-the-art personalized reward modeling methods. 
To ensure a fair comparison, baselines include $\mathcal{C}_u$ in the training set for seen users.
\begin{itemize}[leftmargin=*] \vspace{-\topsep}
    \item \textbf{Ref}~\citep{qwen3} is a reference baseline that uses the pretrained LLM's per-token log-probability as a reward signal (i.e., $r(x,y) = \frac{1}{|y|}\sum_i \log P(y_i \mid x, y_{<i})$),  without any training or personalization.


    \item \textbf{BT}~\citep{stiennon2020learning, ouyang2022training} is a standard Bradley–Terry~\citep{bradley1952rank} reward model fine-tuned on all users' preference data jointly, without any user-specific personalization.
    
    \item\textbf{VPL}~\citep{VPL} encodes each user's preference history into a variational latent variable, which conditions the reward prediction.

    \item\textbf{PAL}~\citep{PAL} represents each user as an ideal point in a shared preference latent space to compute personalized rewards.
    
    \item\textbf{LoRe}~\citep{LoRe} models individual user preferences as weighted combinations of shared basis reward functions.
    
    \item\textbf{PReF}~\citep{PReF} represents user-specific rewards as a linear combination of basis reward functions with SVD initialization and L2 regularization.
    
\end{itemize}

\subsection{Implementation Details}
\begin{table}[t]
\centering
\caption{Best hyperparameter configurations for \proposed across all datasets.}
\label{tab:best_hp}
\resizebox{0.9\textwidth}{!}{
\begin{tabular}{l | cccccc}
\toprule
\textbf{Dataset} & \textbf{Epochs} & \textbf{Batch size} & \textbf{Learning rate}  & \textbf{Dropout} & \textbf{Weight decay} & $K$ \\
\midrule\midrule
PersonalLLM ($\alpha{=}0.001$) & 5 & 8 & 2e-5 & 0.0 & 0.01  & 8  \\
PersonalLLM ($\alpha{=}0.01$)  & 5 & 8 & 2e-5 & 0.0 & 0.001 & 8  \\
PersonalLLM ($\alpha{=}0.1$)   & 5 & 8 & 2e-5 & 0.0 & 0.01  & 8  \\
TL;DR ($n{=}100$)              & 3 & 8 & 6e-5 & 0.1 & 0.001 & 8  \\
TL;DR ($n{=}150$)              & 3 & 8 & 6e-5 & 0.1 & 0.001 & 8  \\
PRISM                          & 5 & 8 & 6e-5 & 0.0 & 0.01  & 12 \\
\bottomrule
\end{tabular}
}
\end{table}

\label{appendix:implementation_details}
All experiments are conducted on NVIDIA A40 GPUs and AMD EPYC 7763 CPUs.
We optimize with AdamW and cosine learning rate scheduling.
We fix the latent dimension $D{=}64$, and regularization strength $\beta{=}0.001$ across all datasets, as performance is insensitive to these values.
We perform grid search over learning rate $\in \{2\mathrm{e-}5, 4\mathrm{e-}5, 6\mathrm{e-}5\}$, dropout $\in \{0.0, 0.1\}$, weight decay $\in \{0.001, 0.01\}$, $K \in \{4, 8, 12, 16, 20\}$, temperature hyperparameters $\tau_d,\tau_m\in\{0.1,1.0\}$.
The best hyperparameters per dataset are reported in Table~\ref{tab:best_hp}.

\subsection{Inference-Time Alignment Setup}
\label{appendix:inf_setup}
Figure~\ref{fig:winrate} shows the inference-time alignment results via best-of-$N$ sampling on PRISM unseen users.
For each user's test query, we generate $N{=}8$ candidate responses using GPT-5 mini as the policy model, each conditioned on a distinct style inferred from the preference basis label (see Figure~\ref{fig:pref_space}).
Following the LLM-as-a-judge framework~\citep{zheng2023judging}, GPT-5.1 evaluates each method's selected response against BT's in a pairwise manner, with the user's profile (demographics, preferences, and values) as context and randomized response ordering to mitigate position bias.
We report the $\Delta$ Win Rate, the win rate over BT normalized by the global minimum.
Full prompt templates are provided in Appendix~\ref{app:prompt_inference_time_alignment}.

\section{Additional Experiments}
\subsection{Backbone Scaling}
\label{app:backbone_scaling}


\begin{table}[t]
\centering
\caption{Effect of backbone scale on accuracy and efficiency for the PRISM dataset.}
\small
\label{tab:backbone}
\resizebox{0.9\textwidth}{!}{
\begin{tabular}{l|ccccc}
\toprule
\textbf{Backbone} & \textbf{Seen} & \textbf{Unseen} & \textbf{Overall} & \textbf{Training (min)} & \textbf{Inference (min)} \\
\midrule\midrule
0.6B & 61.31{\scriptsize$\pm$0.57} & 60.58{\scriptsize$\pm$0.53} & 60.94{\scriptsize$\pm$0.18} & \textbf{80.8} & \textbf{20.1} \\
1.7B & 61.28{\scriptsize$\pm$0.32} & 60.44{\scriptsize$\pm$0.27} & 60.86{\scriptsize$\pm$0.21} & 131.4 & 33.6 \\
4B   & 61.72{\scriptsize$\pm$0.51} & \textbf{61.31{\scriptsize$\pm$0.49}} & 61.51{\scriptsize$\pm$0.43} & 288.1 & 74.6 \\
8B   & \textbf{62.14{\scriptsize$\pm$0.46}} & 61.15{\scriptsize$\pm$0.46} & \textbf{61.64{\scriptsize$\pm$0.32}} & 446.8 & 117.3 \\
\bottomrule
\end{tabular}
}
\end{table}
Table~\ref{tab:backbone} examines the effect of backbone scale by varying Qwen3 from 0.6B to 8B parameters with LoRA~\citep{hu2022lora} fine-tuning ($r{=}16$, $\alpha{=}32$) under identical hyperparameters.
While the 8B backbone achieves the best overall accuracy, the improvement over 0.6B is marginal (+0.7\%) relative to its cost (5.5× training and 5.8× inference time).
This accuracy--efficiency trade-off suggests that smaller backbones are a practical choice for \proposed, as shared preference bases and uncertainty-aware distributional matching compensate for reduced backbone capacity.




\begin{figure*}[t]
    \centering

    \begin{minipage}[b]{\linewidth}
        \centering
        \includegraphics[width=0.48\linewidth]{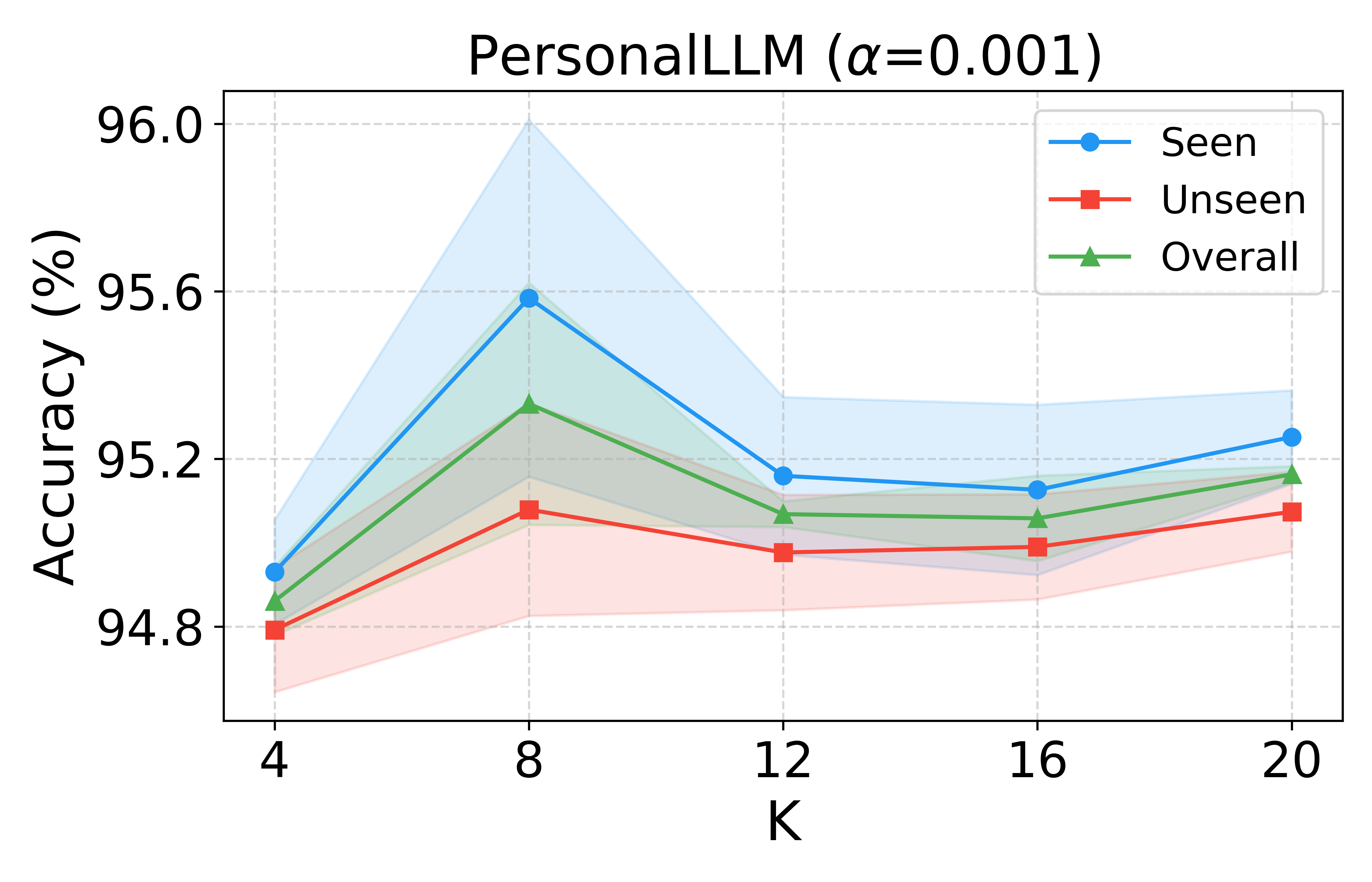}
        \hfill
        \includegraphics[width=0.48\linewidth]{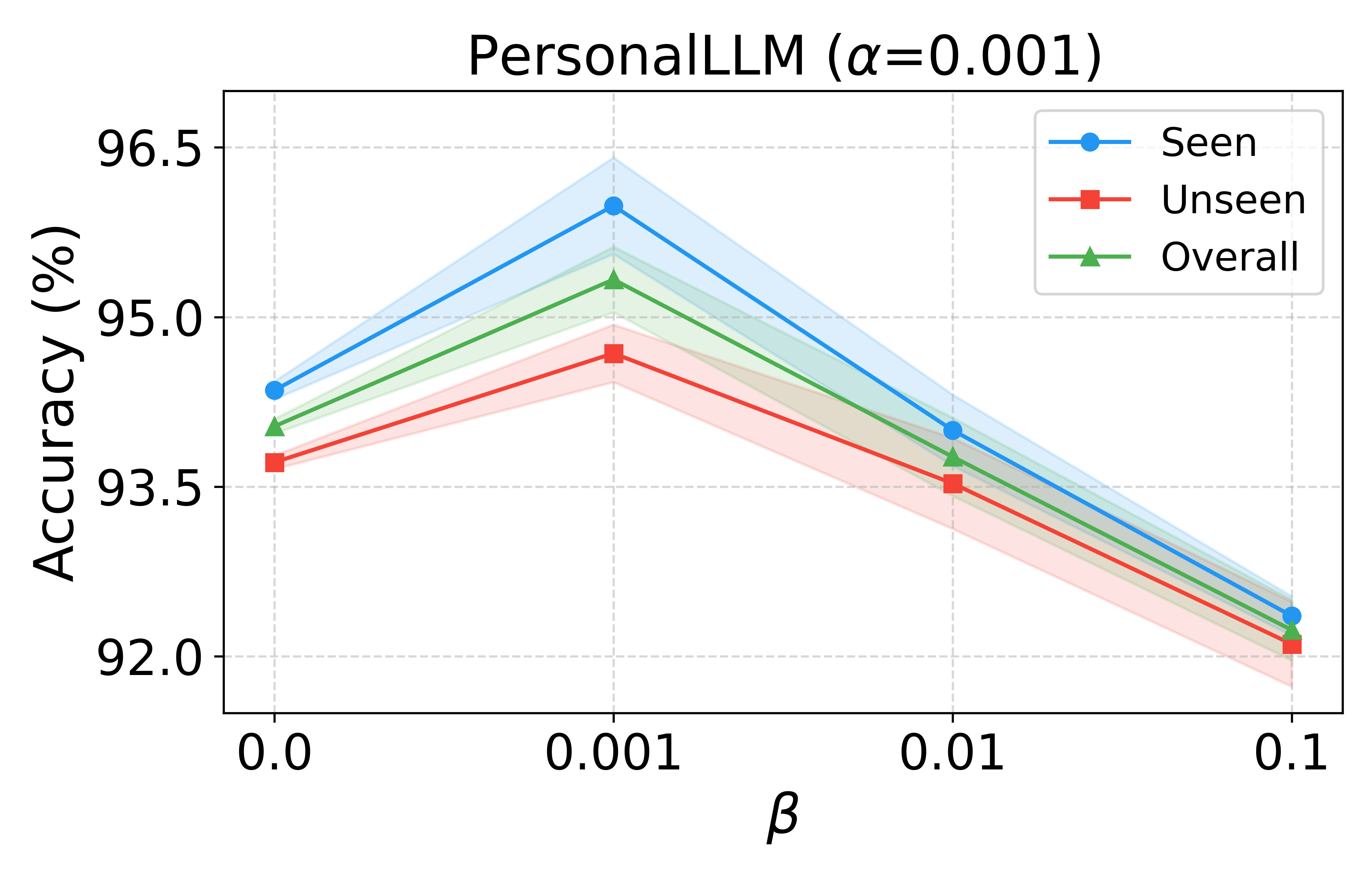}
    \end{minipage}
    
    \vspace{5pt}

    \begin{minipage}[b]{\linewidth}
        \centering
        \includegraphics[width=0.48\linewidth]{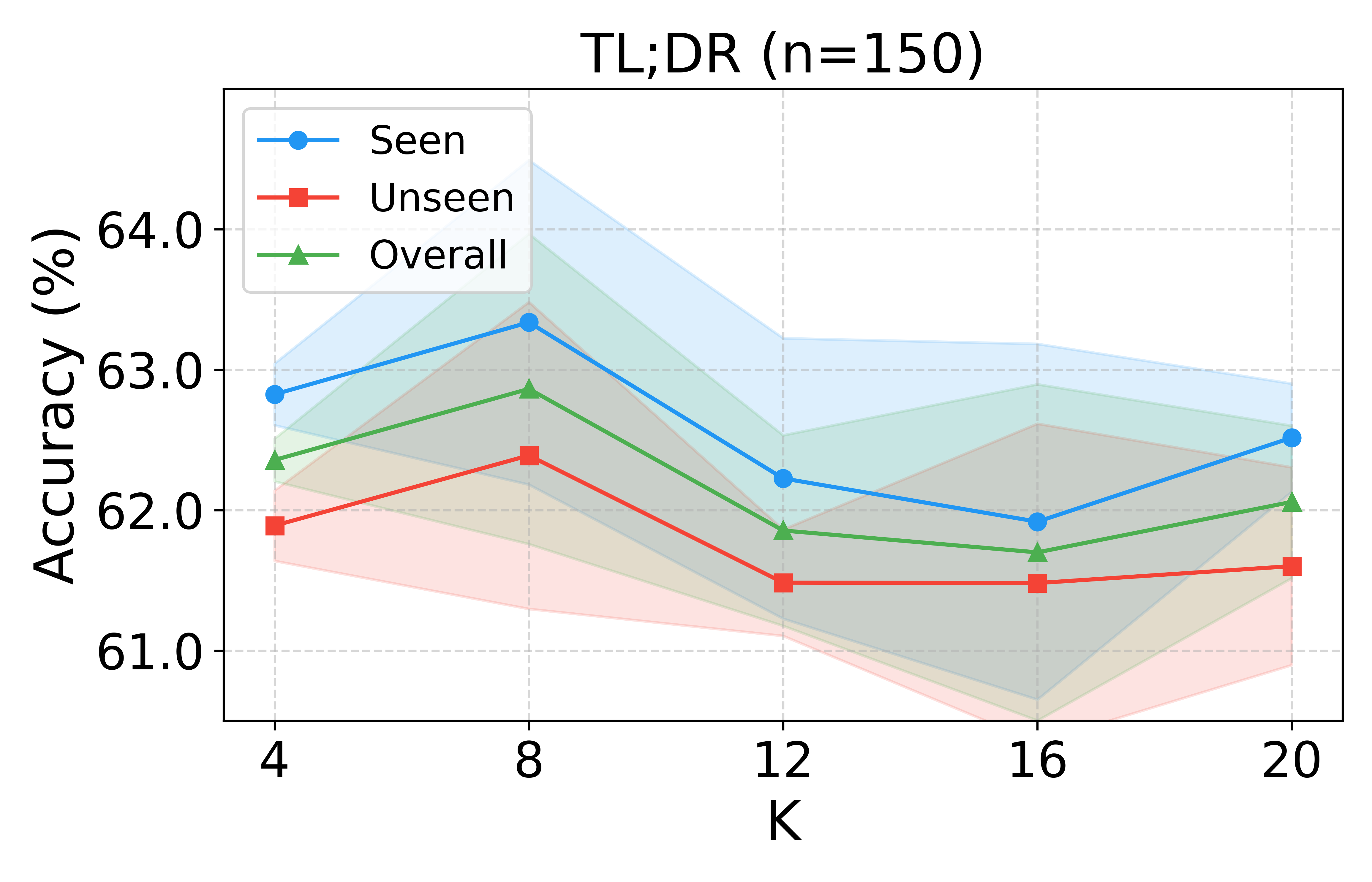}
        \hfill
        \includegraphics[width=0.48\linewidth]{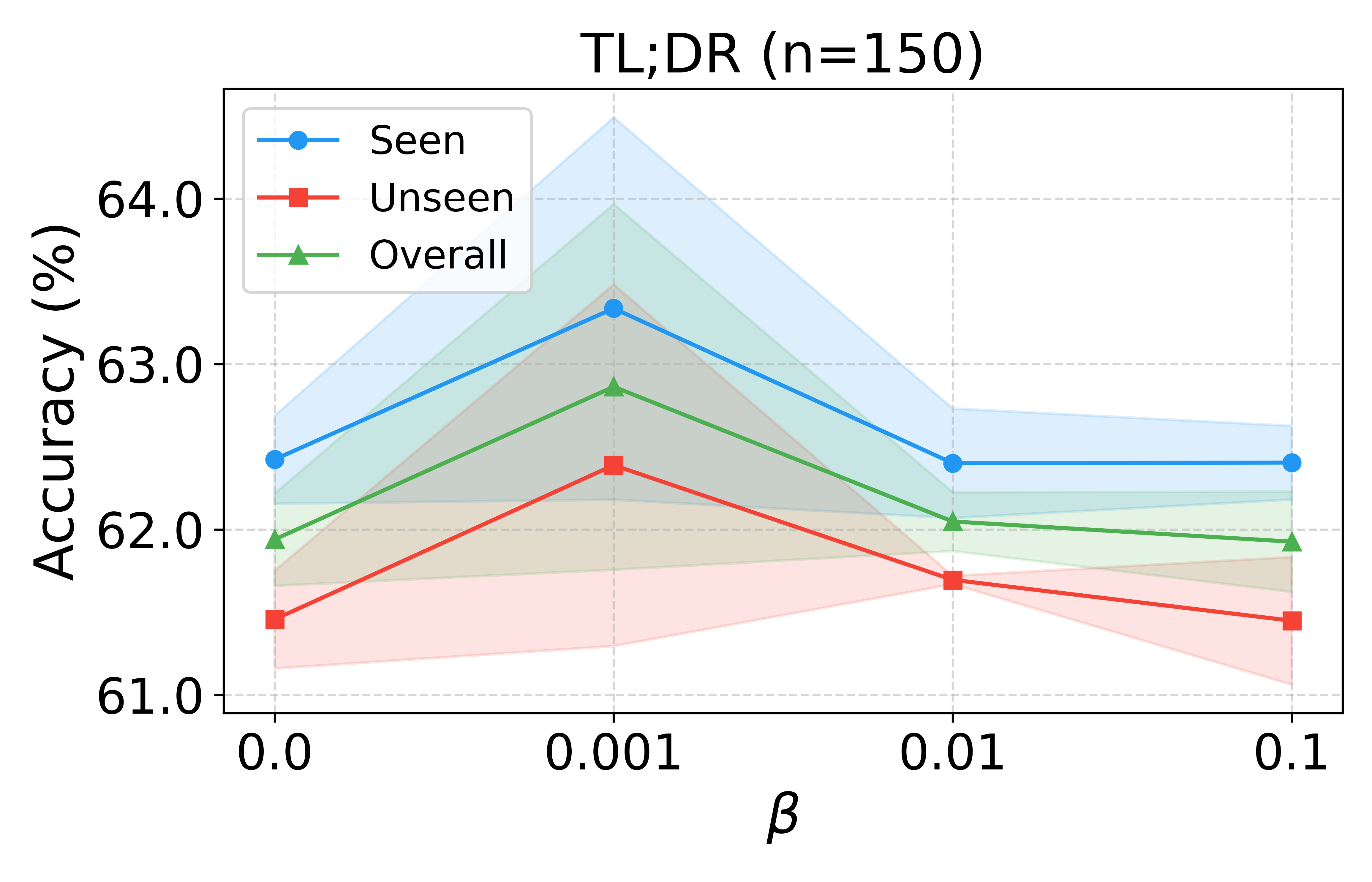}
    \end{minipage}

    \vspace{5pt}

    \begin{minipage}[b]{\linewidth}
        \centering
        \includegraphics[width=0.48\linewidth]{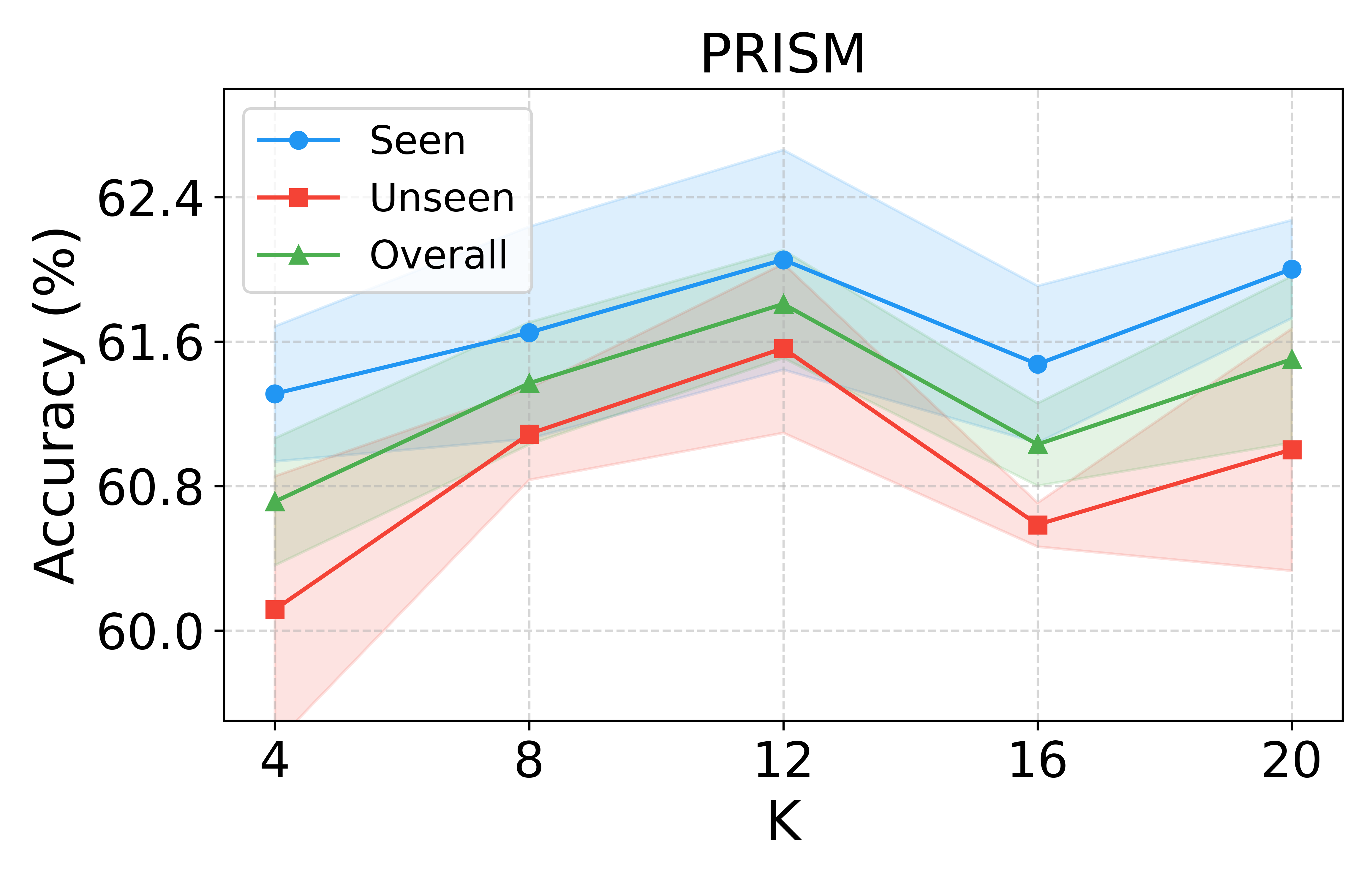}
        \hfill
        \includegraphics[width=0.48\linewidth]{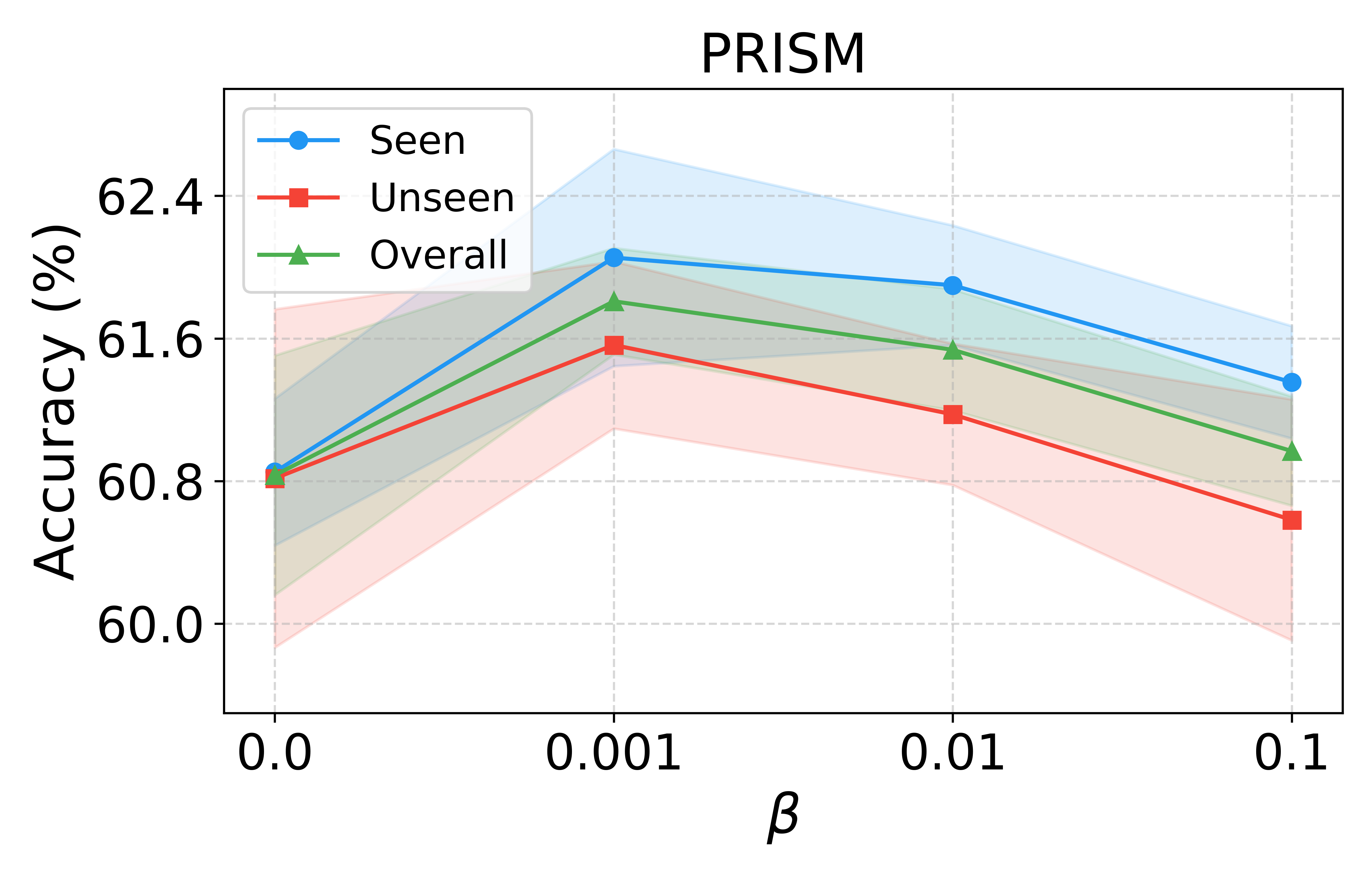}
    \end{minipage}
    \caption{Hyperparameter sensitivity analysis on three datasets.} 
    \label{fig:hyperparameters}
\end{figure*}
\subsection{Hyperparameter Sensitivity}
\label{appendix:hyperparams}

\paragraph{Number of preference bases ($K$).}
Figure~\ref{fig:hyperparameters} (left) shows the effect of varying $K \in \{4, 8, 12, 16, 20\}$ on three datasets.
Performance peaks at K=8 on PersonalLLM and TL;DR and at K=12 on PRISM, with marginal declines beyond.
These results suggest that too many bases may introduce redundancy or optimization difficulty. 
In other words, a small number of shared preference bases suffices to capture population-level preference diversity, making \proposed scalable as $K$ remains fixed regardless of user population size.



\paragraph{Regularization strength ($\beta$).}
Figure~\ref{fig:hyperparameters} (right) shows the effect of varying 
$\beta \in \{0.0, 0.001, 0.01, 0.1\}$. Across all three datasets, 
performance peaks at $\beta{=}0.001$ and degrades at both extremes. 
Without regularization ($\beta{=}0.0$), user distributions lack structural guidance from the mixture-of-Gaussians prior, scattering arbitrarily in the latent space.
Conversely, excessive regularization ($\beta{=}0.1$) over-concentrates user distributions toward the prior, potentially suppressing user-specific information.
We adopt $\beta{=}0.001$ as the default.


\subsection{Few-shot, Uncertainty and Efficiency Analysis on Additional Datasets}
\label{appendix:uncertain_fewshot}
Figure~\ref{fig:combined_extra} extends the few-shot adaptation, uncertainty robustness and adaptation efficiency analysis of Figure~\ref{fig:combined} to PersonalLLM ($\alpha{=}0.001$) and TL;DR ($n{=}150$).
\proposed achieves the highest accuracy across few-shot settings and all uncertainty groups while maintaining near-zero adaptation overhead, confirming that the findings on PRISM generalize across datasets.
Notably, in few-shot adaptation (Figure~\ref{fig:fewshot_2}), LoRe and PReF show increasing accuracy as $|\mathcal{C}_u|$ grows, whereas \proposed achieves the highest accuracy even with $|\mathcal{C}_u|{=}1$, highlighting that the variational user encoder effectively infers the preferences of unseen users.

\begin{figure*}[t]
    \centering

    \begin{minipage}[b]{\linewidth}
        \centering
        \includegraphics[width=0.48\linewidth]{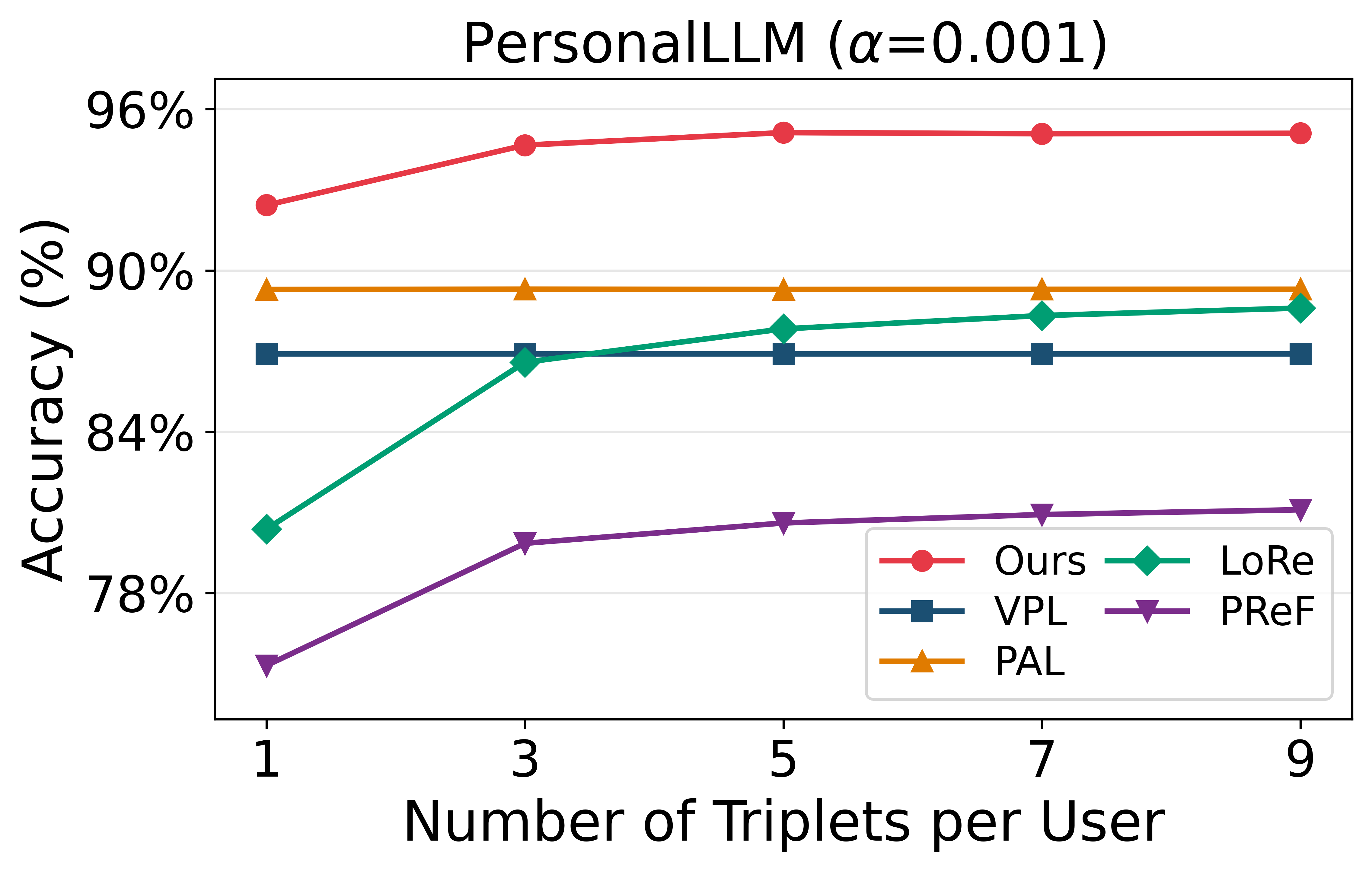}
        \hfill
        \includegraphics[width=0.48\linewidth]{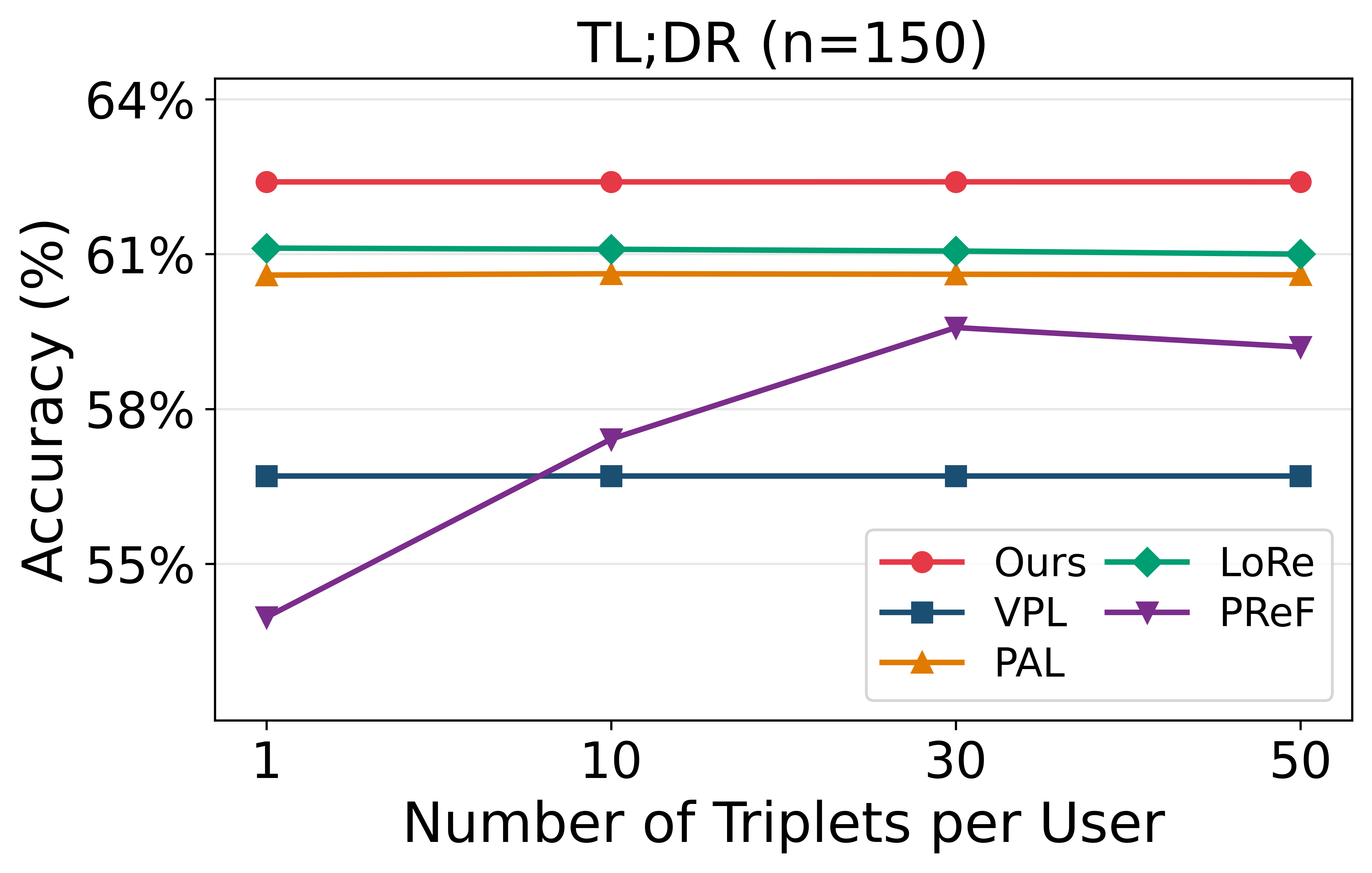}
        \subcaption{Few-shot adaptation (Varying $|\mathcal{C}_u|$)}
        \label{fig:fewshot_2}
    \end{minipage}

    \vspace{5pt}
    
    \begin{minipage}[b]{\linewidth}
        \centering
        \includegraphics[width=0.48\linewidth]{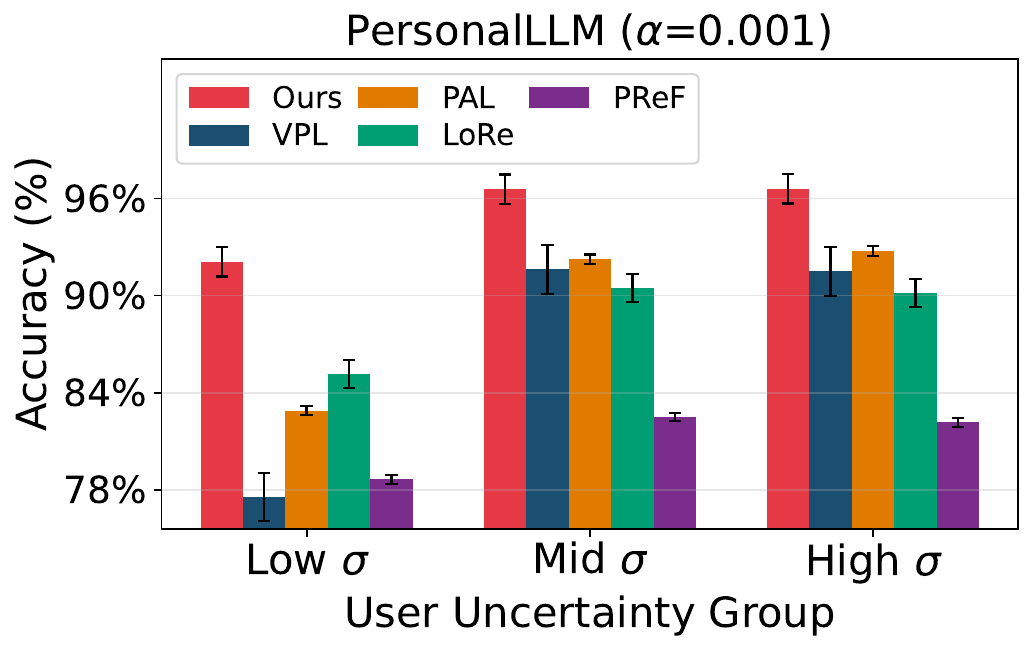}
        \hfill
        \includegraphics[width=0.48\linewidth]{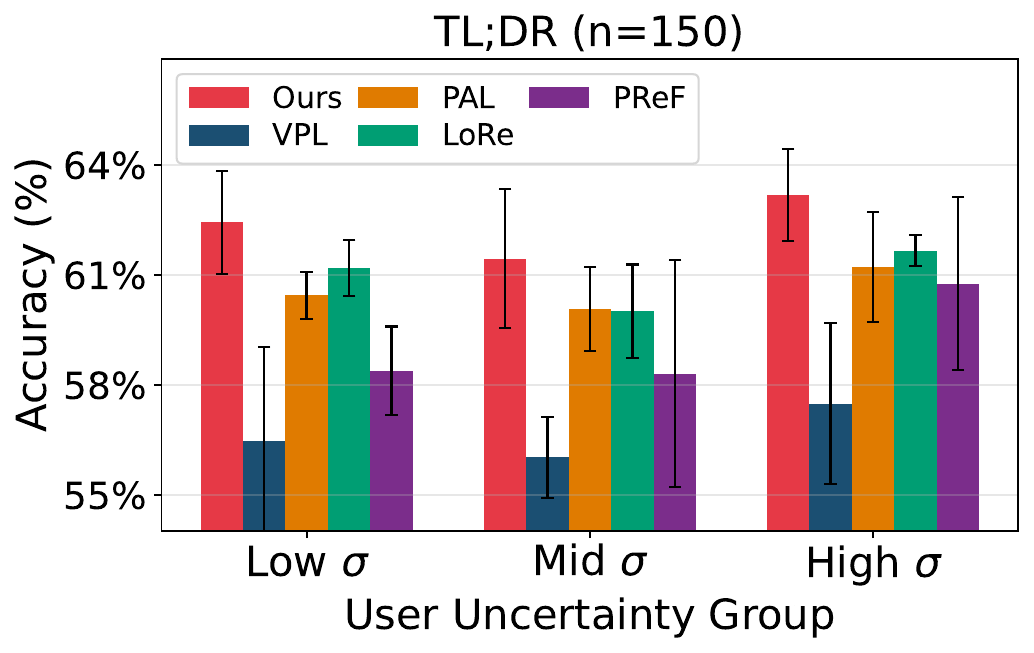}
        \subcaption{Uncertainty robustness}
        \label{fig:uncert_2}
    \end{minipage}

    \vspace{5pt}

    \begin{minipage}[b]{\linewidth}
        \centering
        \includegraphics[width=0.48\linewidth]{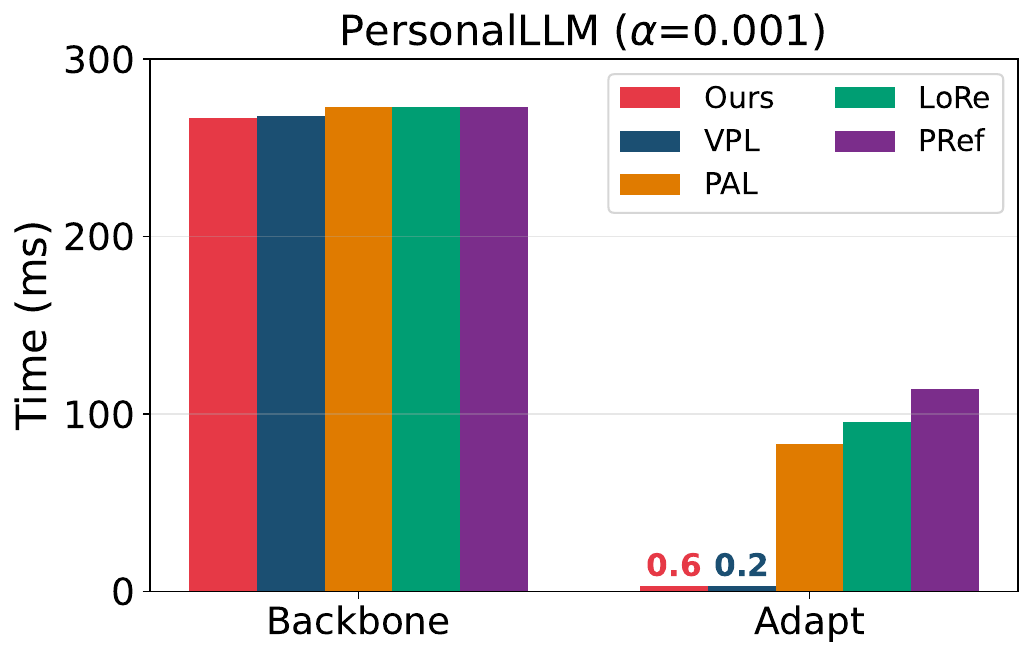}
        \hfill
        \includegraphics[width=0.48\linewidth]{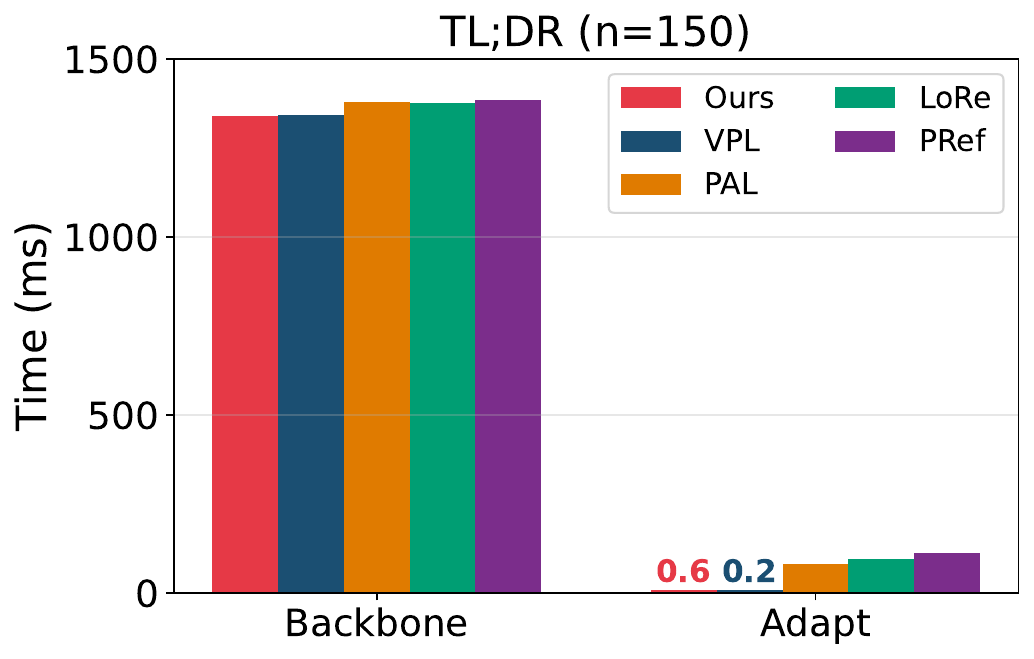}
        \subcaption{Per-user adaptation time (ms)}
        \label{fig:adapt_time_2}
    \end{minipage}
    
    \caption{In-depth comparison on PersonalLLM ($\alpha{=}0.001$) and TL;DR ($n{=}150$) unseen users.}
    \label{fig:combined_extra}
\end{figure*}

\vspace{100pt}
\subsection{Coverage and Generalization of Preference Bases}
\begin{wrapfigure}{r}{0.6\textwidth}
  \vspace{-20pt}
  \centering
  \includegraphics[width=0.55\textwidth]{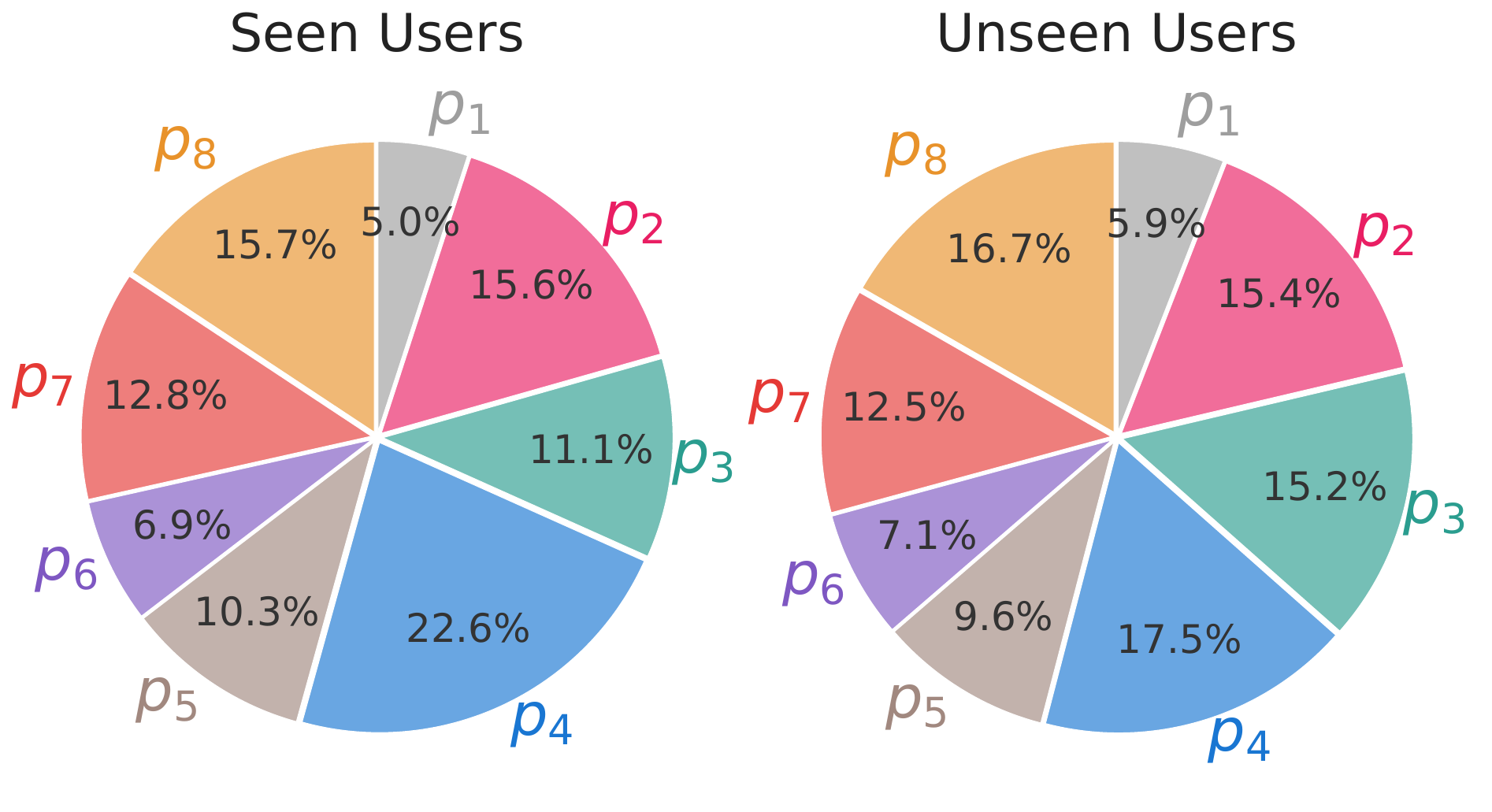}
  \caption{Distribution of dominant preference bases for seen and unseen users on PRISM.}
  \label{fig:pie_chart}
  \vspace{-10pt}
\end{wrapfigure}
Figure~\ref{fig:pie_chart} visualizes the distribution of dominant preference bases across seen and unseen users on PRISM, derived from the learned preference space in Figure~\ref{fig:pref_space}.
Two observations emerge: (1) users are spread across all K preference bases, suggesting that the bases effectively capture diverse user preferences.
(2) The two distributions exhibit similar trends, indicating that the learned preference bases generalize to unseen users.


\section{Proofs and Derivations}
\label{app:proof}
\subsection{Concavity of $\sigma^2_\Delta$ with Respect to $\mathbf{w}_u$}
\label{app:concavity}

\begin{proposition}
$\sigma_\Delta^2 = \sum_{k=1}^K w_{u,k}(\Delta\phi_k - \mu_\Delta)^2$ is concave with respect to $\mathbf{w}_u$.
\end{proposition}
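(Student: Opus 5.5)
The plan is to expand the variance into a form where concavity is evident. Write $a_k = \Delta\phi_k$ and $\mathbf{w} = \mathbf{w}_u$, with $\Delta\phi$ treated as fixed, i.e., independent of $\mathbf{w}$. Since $\sum_k w_k = 1$ on the simplex $\Delta^{K-1}$, expanding the square gives
\begin{equation*}
\sigma_\Delta^2 = \sum_{k=1}^K w_k a_k^2 - 2\mu_\Delta \sum_{k=1}^K w_k a_k + \mu_\Delta^2 \sum_{k=1}^K w_k = \sum_{k=1}^K w_k a_k^2 - \Bigl(\sum_{k=1}^K w_k a_k\Bigr)^2 .
\end{equation*}
This is the usual ``second moment minus squared mean'' identity for a discrete distribution $\mathbf{w}$ supported on $\{a_k\}$.

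Given this form, concavity on the convex domain $\Delta^{K-1}$ follows directly. The first term $\mathbf{a}^{\odot 2\top}\mathbf{w}$ is linear in $\mathbf{w}$. The second term is $-(\mathbf{a}^\top \mathbf{w})^2$, which is the negative of a convex function: it is the composition of the convex map $t\mapsto t^2$ with the affine map $\mathbf{w}\mapsto \mathbf{a}^\top\mathbf{w}$. A linear function plus a concave function is concave. Equivalently, the Hessian of the expression is $-2\,\mathbf{a}\mathbf{a}^\top \preceq 0$, which is negative semidefinite.

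The one subtle point, and the step I expect to need care, is that the raw formula $\sum_k w_k (a_k - \mu_\Delta)^2$ only equals the expanded form when $\sum_k w_k = 1$. Off the simplex, for general $\mathbf{w}\in\mathbb{R}^K$, the term $\mu_\Delta^2\sum_k w_k$ is a cubic rather than a quadratic, so concavity there is not claimed. The statement should therefore be read as concavity on $\Delta^{K-1}$, which is where $\mathbf{w}_u$ lives by the softmax in Eq.~\eqref{eq:w_uk}.

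Concavity of a function restricted to a convex set follows from its agreement there with a function concave on all of $\mathbb{R}^K$, namely $\sum_k w_k a_k^2 - (\mathbf{a}^\top\mathbf{w})^2$. That finishes the argument. As a sanity check that links to the discussion after Eq.~\eqref{eq:l_vbt}: at the vertices (one-hot $\mathbf{w}$) we get $\sigma_\Delta^2=0$, and concavity together with nonnegativity forces larger values in the interior, i.e., for more diffuse $\mathbf{w}_u$.
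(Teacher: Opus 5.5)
Your proof is correct and is essentially the paper's argument. The paper simply asserts that the Hessian with respect to $\mathbf{w}_u$ is $-2\,\Delta\boldsymbol{\phi}\Delta\boldsymbol{\phi}^\top \preceq 0$, which is exactly the Hessian of your rewritten form $\sum_k w_k a_k^2 - (\mathbf{a}^\top\mathbf{w})^2$; your explicit use of $\sum_k w_{u,k}=1$ to obtain that form, and your restriction of the claim to the simplex, supply a justification the paper leaves implicit, since the raw expression has a different Hessian off the simplex and is not concave on all of $\mathbb{R}^K$.
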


\begin{proof}
The Hessian of $\sigma^2_\Delta$ with respect to $\mathbf{w}_u$ is:
\begin{equation}
    \nabla^2_{\mathbf{w}_u} \sigma^2_\Delta = -2 \Delta\boldsymbol{\phi} \Delta\boldsymbol{\phi}^\top.
\end{equation}
For any vector $\mathbf{v} \in \mathbb{R}^K$,
\begin{equation}
    \mathbf{v}^\top \nabla^2_{\mathbf{w}_u} \sigma^2_\Delta \, \mathbf{v} = -2 \mathbf{v}^\top (\Delta\boldsymbol{\phi} \Delta\boldsymbol{\phi}^\top) \mathbf{v} = -2 (\mathbf{v}^\top \Delta\boldsymbol{\phi})^2 \leq 0,
\end{equation}
so the Hessian is negative semi-definite and $\sigma^2_\Delta$ is concave. 
At the vertices of the simplex (one-hot $\mathbf{w}_u$), $\sigma^2_\Delta$ reduces to zero and increases as $\mathbf{w}_u$ becomes more diffuse.
\end{proof}

\subsection{Proof of Gradient Attenuation in $\mathcal{L}_{\mathrm{VBT}}$}
\label{app:proof_vbt}

\begin{proposition}
As $\sigma_\Delta^2 \to \infty$, the gradients of $\mathcal{L}_{\mathrm{VBT}}$ with respect to all parameters $\phi_k$ and $w_{u,k}$ vanish, regardless of the sign of $\mu_\Delta$.
\end{proposition}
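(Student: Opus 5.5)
The plan is a direct chain-rule computation on a single summand of $\mathcal{L}_{\mathrm{VBT}}$. Expectations over $u$ and over triplets are linear, so vanishing per-sample gradients give vanishing gradients of $\mathcal{L}_{\mathrm{VBT}}$. Write
\[
s = \sqrt{1 + \pi\sigma_\Delta^2/8}, \qquad z = \mu_\Delta / s, \qquad \ell = -\log\sigma(z).
\]
Since $\frac{d}{dz}\bigl(-\log\sigma(z)\bigr) = -(1-\sigma(z))$, for any scalar parameter $\theta \in \{\phi_k, w_{u,k}\}$ we get
\[
\frac{\partial \ell}{\partial \theta} = -(1-\sigma(z))\left( \frac{1}{s}\,\frac{\partial \mu_\Delta}{\partial \theta} - \frac{\pi}{16}\,\frac{\mu_\Delta}{s^{3}}\,\frac{\partial \sigma_\Delta^2}{\partial \theta} \right).
\]
The first step is to record the factor $1-\sigma(z)$. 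It lies in $(0,1)$ for every real $z$, whatever the sign of $\mu_\Delta$. This is what gives the ``regardless of the sign'' clause: for $\mu_\Delta<0$ the prefactor is close to $1$ but still bounded, and for $\mu_\Delta>0$ it is even smaller. So the sign of $\mu_\Delta$ cannot rescue the gradient from the attenuation, and the whole behavior is controlled by the bracket.

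The second step is to compute the local partials explicitly so the bracket has a concrete form. Using $\mu_\Delta = \sum_k w_{u,k}\Delta\phi_k$ and $\sigma_\Delta^2 = \sum_k w_{u,k}\Delta\phi_k^2 - \mu_\Delta^2$, I get:
\begin{itemize}
\item $\partial\mu_\Delta/\partial w_{u,k} = \Delta\phi_k$ and $\partial\sigma_\Delta^2/\partial w_{u,k} = \Delta\phi_k^2 - 2\mu_\Delta\Delta\phi_k$;
\item for the basis functions (through $\Delta\phi_k$, and hence through $\phi_k(x,y^{\pm})$ with a $\pm1$ factor), $\partial\mu_\Delta/\partial \Delta\phi_k = w_{u,k}$ and $\partial\sigma_\Delta^2/\partial\Delta\phi_k = 2w_{u,k}(\Delta\phi_k-\mu_\Delta)$.
\end{itemize}
Substituting these, every gradient component has the form
\[
(1-\sigma(z))\left[\frac{a_\theta}{s} + \frac{b_\theta}{s^{3}}\right],
\]
where $a_\theta$ and $b_\theta$ are the first-order local quantities above (the latter multiplied by $\mu_\Delta$).

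The third step is the limit. Following the paper's framing of the variance-attenuated loss, I treat the limit $\sigma_\Delta^2\to\infty$ as increasing the attenuation denominator $s$, evaluated at a given configuration of the local quantities $\mu_\Delta$, $\Delta\phi_k$, $w_{u,k}$ that enter $a_\theta, b_\theta$. Then $s\to\infty$ while the prefactor stays in $(0,1)$ and $a_\theta, b_\theta$ are fixed. Hence both $a_\theta/s$ and $b_\theta/s^3$ tend to $0$, and every component of $\nabla\ell$ vanishes for either sign of $\mu_\Delta$. This is exactly the stated ``attenuation'' reading. The $1/s$ and $1/s^3$ factors are the mechanism, and they are the same factors that make $\mathcal{L}_{\mathrm{VBT}}$ reduce to $\mathcal{L}_{\mathrm{BT}}$ when $s=1$ (one-hot $\mathbf{w}_u$, by the concavity proposition in Appendix~\ref{app:concavity}).

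The main obstacle is this last step. $\sigma_\Delta^2$ is itself a function of $\Delta\phi_k$ and $\mathbf{w}_u$, so in a fully joint limit $a_\theta$ and $b_\theta$ could grow too. The proof has to make explicit that the limit is taken in the attenuation factor, with the local derivatives entering only linearly and the prefactor uniformly bounded. I would write the proof around the displayed factorization so that this dependence is transparent. As a supporting remark, I would note that the dominant term $a_\theta/s$ is at least suppressed by $s^{-1}$ relative to the unattenuated BT gradient. I would also note that $z=\mu_\Delta/s\to 0$ whenever $\mu_\Delta$ grows slower than $s$, so the predicted probability is pulled toward $1/2$ and the gradient is suppressed regardless of its sign.
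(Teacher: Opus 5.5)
Your proposal is correct and follows the paper's proof essentially step for step. Both use the chain rule through $\mu_\Delta$ and $\sigma_\Delta^2$, the bound $0<1-\sigma(z)<1$, and the vanishing factors $(1+\pi\sigma_\Delta^2/8)^{-1/2}$ and $(1+\pi\sigma_\Delta^2/8)^{-3/2}$ multiplying coefficients that are held fixed.

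The caveat you flag is genuine, and it is exactly what the paper passes over with the phrase ``bounded coefficients.'' Since $\sigma_\Delta^2\le\max_k(\Delta\phi_k-\mu_\Delta)^2$, those coefficients cannot stay bounded in a true limit. For example, $\mathbf{w}_u=(\tfrac{1}{2},\tfrac{1}{2})$ and $\Delta\boldsymbol{\phi}=(-M,M)$ give $\partial\mathcal{L}_{\mathrm{VBT}}/\partial w_{u,1}\to\sqrt{2/\pi}$ as $M\to\infty$. So both arguments establish the statement only in your fixed-coefficient reading.
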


\begin{proof}
Let $s = \sigma_\Delta^2$, $c = \frac{\pi}{8}$, and $z = \frac{\mu_\Delta}{\sqrt{1+cs}}$. We first compute the partial derivatives of $\mathcal{L}_{\mathrm{VBT}} = -\log\sigma(z)$ with respect to the intermediate variables $\mu_\Delta$ and $s$:

\begin{align}
\frac{\partial \mathcal{L}_{\mathrm{VBT}}}{\partial \mu_\Delta} &= -(1-\sigma(z)) \cdot \frac{1}{\sqrt{1+cs}}, \\
\frac{\partial \mathcal{L}_{\mathrm{VBT}}}{\partial s} &= \frac{\mu_\Delta c (1-\sigma(z))}{2(1+cs)^{3/2}}.
\end{align}

As $s \to \infty$,  $\frac{1}{\sqrt{1+cs}} \to 0$ and $\frac{1}{(1+cs)^{3/2}} \to 0$. 
Since $(1-\sigma(z))$ is bounded by $[0, 1]$, both $\frac{\partial L_{VBT}}{\partial \mu_\Delta}$ and $\frac{\partial L_{VBT}}{\partial s}$ converge to zero.

\textbf{Gradient with respect to $\phi_k$.} By chain rule:
\begin{equation}\label{eq:gradient_phi}
\frac{\partial \mathcal{L}_{\mathrm{VBT}}}{\partial \phi_k} = \frac{\partial \mathcal{L}_{\mathrm{VBT}}}{\partial \mu_\Delta} \cdot w_{u,k} + \frac{\partial \mathcal{L}_{\mathrm{VBT}}}{\partial s} \cdot \frac{\partial s}{\partial \phi_k}.
\end{equation}

\textbf{Gradient with respect to $w_{u,k}$.} By chain rule:
\begin{equation}\label{eq:gradient_w}
\frac{\partial \mathcal{L}_{\mathrm{VBT}}}{\partial w_{u,k}} = \frac{\partial \mathcal{L}_{\mathrm{VBT}}}{\partial \mu_\Delta} \cdot \Delta\phi_k + \frac{\partial \mathcal{L}_{\mathrm{VBT}}}{\partial s} \cdot (\Delta\phi_k - \mu_\Delta)^2.
\end{equation}
%
Since all terms in Eqs.~\eqref{eq:gradient_phi} and ~\eqref{eq:gradient_w} involve the vanishing partial derivatives $\frac{\partial L_{VBT}}{\partial \mu_\Delta}$ and $\frac{\partial L_{VBT}}{\partial s}$ multiplied by bounded coefficients, it follows that $\frac{\partial L_{VBT}}{\partial \phi_k} \to 0$ and $\frac{\partial L_{VBT}}{\partial w_{u,k}} \to 0$ as~$s \to \infty$.
\end{proof}


\subsection{Monte Carlo Estimation of $\mathcal{L}_{\mathrm{REG}}$}
\label{app:mc}

Since $D_{KL}(q_u \| p(z))$ against a MoG prior is intractable in closed form, we decompose it as:

\begin{equation}
    D_{KL}(q_u \| p(z)) = -\mathcal{H}(q_u) - \mathbb{E}_{q_u}[\log p(z)],
\end{equation}

where $\mathcal{H}(q_u)$ is the entropy of $q_u$. Each term is then estimated separately:

\textbf{Entropy term.} The entropy of a diagonal Gaussian $q_u = \mathcal{N}(\boldsymbol{\mu}_u, \mathrm{diag}(\boldsymbol{\sigma}_u^2))$ is given by:
\begin{equation}
    \mathcal{H}(q_u) = \frac{1}{2}\sum_{d=1}^D \left(1 + \log(2\pi) + \log \sigma_{u,d}^2\right).
\end{equation}

\textbf{Cross-entropy term.} $\mathbb{E}_{q_u}[\log p(z)]$ is estimated via the reparameterization trick~\citep{DBLP:journals/corr/KingmaW13} with $S{=}5$ Monte Carlo samples:
\begin{equation}
    \mathbb{E}_{q_u}[\log p(z)] \approx \frac{1}{S}\sum_{s=1}^S \log p(z^{(s)}), \quad z^{(s)} = \boldsymbol{\mu}_u + \boldsymbol{\sigma}_u \odot \boldsymbol{\epsilon}^{(s)}, \quad \boldsymbol{\epsilon}^{(s)} \sim \mathcal{N}(\mathbf{0}, \mathbf{I}),
\end{equation}
where $z^{(s)}$ is a reparameterized sample that allows gradients to flow back to $\boldsymbol{\mu}_u$ and $\boldsymbol{\sigma}_u$.
The MoG log-probability is computed as:
\begin{equation}
    \log p(z) = \log \frac{1}{K}\sum_{k=1}^K \mathcal{N}(z; \boldsymbol{\mu}_k, \mathrm{diag}(\boldsymbol{\sigma}_k^2)) = \mathrm{logsumexp}_k\!\left(\log \mathcal{N}(z; \boldsymbol{\mu}_k, \boldsymbol{\sigma}_k^2)\right) - \log K,
\end{equation}
with logsumexp applied for numerical stability. 
Thus, combining both terms yields a differentiable estimate of ${D}_{KL}(q_u \parallel p(z))$ that can be optimized end-to-end via backpropagation.

\section{Prompts}
\label{app:prompts}
\subsection{Preference Space Analysis}
\label{app:prompt_preference}
To interpret the learned preference space in Figure~\ref{fig:pref_space}, we label each preference basis and classify user triplets using Claude Sonnet 4.6.

\paragraph{Preference Basis Labeling.}
For each preference basis, we sample 10 triplets from the top-5 nearest users and use the following prompt:

\begin{tcolorbox}[colback=gray!5, colframe=black, fontupper=\small]
The following are 10 triplets from users with similar preferences.\\
Describe what drives their preference for the chosen over the rejected responses.\\[10pt]
{[1]} $(x, y^+, y^-)$\\[4pt]
\dots \\[4pt]
{[10]} $(x, y^+, y^-)$
\end{tcolorbox}
\noindent The resulting labels (e.g., \textit{conversational \& natural}, \textit{direct \& structured}) are reported in Figure~\ref{fig:pref_space}.

\paragraph{Triplet Classification.}
Using the $K$ labels, each triplet is classified by prompting:
\begin{tcolorbox}[colback=gray!5, colframe=black, fontupper=\small]
Given a preference triplet $(x, y^+, y^-)$, which of the following preference basis labels best explains the user's choice?\\[4pt]
{[1]} Conversational \& Natural\\
{[2]} Thorough \& Genuine\\
\dots\\
{[8]} Empathetic \& Principled\\[4pt]
Answer with only the number.
\end{tcolorbox}




\subsection{Inference-Time Alignment}
\label{app:prompt_inference_time_alignment}
In Figure~\ref{fig:winrate}, we evaluate whether the learned personalized rewards can guide response selection at inference time via best-of-N sampling.

\paragraph{Response Generation.}
We generate $N{=}8$ candidate responses using GPT-5 mini as a policy model, each conditioned on a distinct preference basis label as a response style.
\begin{tcolorbox}[colback=gray!5, colframe=black, fontupper=\small]
You are a helpful AI assistant. Generate a \{\textit{style}\} response to the following query: \{\textit{query}\}
\end{tcolorbox}

\paragraph{LLM-as-a-Judge.}
We employ GPT-5.1 as an evaluator, comparing each method's selected response against BT's, with the user's profile (e.g., demographics, LLM preferences, personal values) to guide evaluation. 
Response order is randomized to mitigate position bias.
\begin{tcolorbox}[colback=gray!5, colframe=black, fontupper=\small]
You are an impartial judge. Given the user's profile, the query, and the two responses, which is more appropriate for this user?\\[4pt]
{[User Profile]} \textit{\{user\_profile\}}\\[4pt]
{[Query]} \textit{\{query\}}\\[4pt]
{[Response A]} \textit{\{response\_a\}}\\[4pt]
{[Response B]} \textit{\{response\_b\}}\\[4pt]
Reply with ONLY ``A'' or ``B''.
\end{tcolorbox}

\end{document}